\documentclass[letterpaper]{article} 
\usepackage[preprint]{aaai2027}  
\usepackage[hyphens]{url}  
\usepackage{graphicx} 
\usepackage{natbib}  
\usepackage{caption} 
\usepackage{algorithm}
\usepackage{algorithmic}

\usepackage{newfloat}
\usepackage{listings}
\DeclareCaptionStyle{ruled}{labelfont=normalfont,labelsep=colon,strut=off} 
\floatstyle{ruled}
\newfloat{listing}{tb}{lst}{}
\floatname{listing}{Listing}

\usepackage{booktabs}

\usepackage{xcolor}
\usepackage{amssymb}
\usepackage{MnSymbol}
\usepackage{makecell}
\usepackage{float}
\usepackage{array}
\usepackage{placeins}
\usepackage{multirow}
\usepackage{tabularx}
\usepackage{todonotes}
\usepackage{subcaption}

\usepackage{amsthm}
\usepackage{booktabs}

\newcommand{\ARR}[1]{\textcolor{black}{#1}}
\newcommand{\AR}[1]{\textcolor{black}{#1}}

\newcommand{\cut}[1]{}

\newcommand{\Clauses}{\ensuremath{\mathcal{C}}}
\newcommand{\BenClauses}{\ensuremath{\Clauses_{+}}}
\newcommand{\ConClauses}{\ensuremath{\Clauses_{-}}}

\newcommand{\Deals}{\ensuremath{\mathcal{P}}}
\newcommand{\deal}{\ensuremath{P}}

\newcommand{\Users}{\ensuremath{\mathcal{U}}}
\newcommand{\user}{\ensuremath{u}}

\newcommand{\argpaths}{\ensuremath{\mathsf{paths}}}

\newcommand{\Args}{\ensuremath{\mathcal{X}}}
\newcommand{\ArgsP}{\ensuremath{\mathcal{X}_\Deals}}

\newcommand{\ArgsC}{\ensuremath{\mathcal{X}_\Clauses}}
\newcommand{\ArgsCP}{\ensuremath{\mathcal{X}_\Clauses^+}}
\newcommand{\ArgsCN}{\ensuremath{\mathcal{X}_\Clauses^-}}
\newcommand{\ArgsR}{\ensuremath{\mathcal{X}_\mathcal{R}}}
\newcommand{\Atts}{\ensuremath{\mathcal{A}}}
\newcommand{\Supps}{\ensuremath{\mathcal{S}}}
\newcommand{\BS}{\ensuremath{\tau}}

\newcommand{\GS}{\ensuremath{\sigma}}
\newcommand{\QBAF}{\ensuremath{\mathcal{Q}}}
\newcommand{\ZOPA}{\ensuremath{\mathcal{Z}}}

\newtheorem*{example}{Example}
\newtheorem{theorem}{Theorem}
\newtheorem{definition}{Definition}
\newtheorem{proposition}{Proposition}

\newtheorem{corollary}{Corollary}
\newtheorem{lemma}{Lemma}

\newtheorem{dummytheorem}{Theorem}
\newtheorem{dummyproposition}{Proposition}
\newtheorem{dummycorollary}{Corollary}
\newtheorem{dummylemma}{Lemma}

\newboolean{arxivversion}
\setboolean{arxivversion}{true} 

\title{Argumentation for Common Ground: \\ 
Finding Zones of Possible Agreement between Individuals in Conflict}

\author{
   Elisa Cavatorta\equalcontrib and Antonio Rago\equalcontrib\\
}
\affiliations{
    King's College London, UK\\

    \{elisa.cavatorta, antonio.rago\}@kcl.ac.uk
}

\begin{document}

\maketitle

\begin{abstract}

 How can common ground between societies in conflict be identified when citizens' acceptability of peace agreements is shaped by contested narratives? 
    Such acceptability is mediated not only by the clauses that agreements include or exclude, but crucially by citizens' subjective reasoning concerning agreements' clauses. \cut{This reasoning encodes deeper narratives, for example about justice, security and identity.} 
    In this paper, we leverage \cut{the strengths of }computational argumentation \cut{in representing and resolving conflicting reasoning }to introduce a novel approach to identifying mutually acceptable agreements among individuals in conflict, i.e. a Zone of Possible Agreement (ZOPA). First, we 
    introduce a quantitative bipolar argumentation framework tailored to represent each side's 
    reasoning about peace agreements\cut{, which includes both attacking and supporting relations between arguments that encode citizens' narratives about clauses}. We then show how merging these  frameworks can enable negotiators to 
    identify peace agreements that are mutually acceptable. 
    To evaluate our approach under conditions of real-world relevance, we focus on the Palestinian-Israeli conflict, where long-standing policy, practitioner and public interest underscores the demand for methods capable of analysing polarised public reasoning.  
    We show how our framework identifies a ZOPA 
    through theoretical analysis and preliminary experiments using survey data from both existing work and retrieved by a large language model.
    The results illustrate how argumentation can empower negotiators and conflict-resolution teams in mapping feasible ZOPAs grounded in citizens' reasoning. 

\end{abstract}



\section{Introduction}
\label{sec:introduction}

Sustainable peace agreements fundamentally depend on citizens' willingness to comply with new institutions that depart from the status quo. When societies are not ready for compromise, agreements face backlash, rejection, or non-ratification \cite{sher2013}. 
Public acceptability is mediated not only by the provisions agreements include (or exclude), but by narratives about their promised outcomes, subjective arguments about how provisions address grievances, and the risks they are perceived to entail. These subjective arguments are inherently difficult to systematise, yet it is precisely 
therein that the bottlenecks to compromise reside \cite{Loreggia_22}. 

Meanwhile, computational argumentation \cite{Dung_95} is a field within AI which excels in representing knowledge and resolving conflicts therein. The formalisms offered by this rich area of research have been deployed in related settings to that we consider here, e.g. opinion modelling \cite{Tarle_22,Rossie_26} and judgmental forecasting \cite{Irwin_22_KR,Gorska_25}. However, to our knowledge, its technologies have not been deployed in real-world conflict resolution, unlike other areas of AI such as Markov decision processes and linear temporal logic \cite{Kasenberg_18}, or large language models (LLMs) \cite{Tessler_24,Konya_25}.

In this paper, we build on the contributions of \citet{Cavatorta_25}, who examine the acceptability of prospective peace agreements in both Israeli and Palestinian societies and identify the Zone of Possible Agreement (ZOPA), the set of agreements in which two parties can find common ground. Using nationally representative samples and experimentally controlled clause values, the authors estimate causal effects of clause inclusion on agreement endorsement. We extend this work by focusing on the theoretical and empirical evaluation of the reasoning underpinning these population-level parameters, leveraging argumentation to \cut{represent and }analyse individuals' reasoning about agreement clauses. 

To do so, we use \emph{quantitative bipolar argumentation frameworks} (QBAFs) \cite{Amgoud_18,Baroni_18}, i.e. formal argumentation frameworks that represent arguments with an intrinsic strength and positive or negative relations between them. \emph{Gradual semantics}, i.e. quantitative evaluation methods, may then be applied to evaluate an argument's acceptability, which have been shown to be useful in settings from explainable AI \cite{Dejl_21} to online review aggregation \cite{Rago_25}. In doing so, we make theoretical and experimental contributions that justify the use of computational argumentation as a means for supporting tools to assist negotiators and conflict-resolution teams in mapping feasible ZOPAs grounded in citizens' reasoning.
The intended users of these tools are those undertaking analysis of negotiations such as professional mediators, peace negotiators and non-governmental organisations who assess the public viability of specific agreements' clauses. 
We believe that by leveraging the reasoning behind citizens' narratives, our method can help to develop tools for revealing ZOPAs which were previously inaccessible to the designers of acceptable peace agreements.

After giving the necessary preliminaries (§\ref{sec:preliminaries}), we make the following contributions:

\begin{itemize}
    \item We introduce novel \cut{form of }QBAFs tailored to represent conflicting individual's reasoning about \cut{peace }agreements and show theoretically that, if equipped with suitable gradual semantics\cut{ 
    that satisfies certain formal properties}, they intuitively represent individuals' views (§\ref{sec:main}).
    \item We demonstrate how a set of QBAFs representing individual citizens' conflicting reasoning can be merged into a single QBAF to indicate ZOPAs, a subset of the agreements, proving formal guarantees thereon and further restricting the set of suitable gradual semantics
    (§\ref{sec:merging}).
    \item 
    We perform preliminary experiments to evaluate our approach using survey data taken from \cite{Cavatorta_25}, \ARR{in addition to data} from the reports of nationally-representative surveys retrieved using an LLM, illustrating the theoretical results and suitability for real-world deployment with negotiators (§\ref{sec:evaluation}).
\end{itemize}

We then consider the related work in the literature (§\ref{sec:related}), before concluding and looking ahead to future work (§\ref{sec:conclusions}).



\section{Preliminaries}
\label{sec:preliminaries}

\paragraph{Application Context}
Peace agreements are contracts between conflicting parties that aim to resolve the underlying issues causing the conflict. In \cite{Cavatorta_25}, a peace agreement, $\deal \in \Deals$, where $\Deals$ is the set of all peace agreements, is a set of \emph{clauses}, i.e.~proposed changes in, or continuations of, the status quo \ARR{(allowing for one-hot representations of multi-value variables)}. We let $\Clauses = \{ C_1,\ldots, C_n\}$ be a set of $n$ possible clauses representing changes in the status quo, where any $\deal \in \Deals$ is such that $\deal \subseteq \Clauses$ \AR{and $\Deals$ is the power set of $\Clauses$}. 
For example, let us consider a situation in the Israeli-Palestinian context in which we have $\Clauses = \{ C_1, C_2\}$. Here, $C_1$ may be the clause calling for a freeze on settlement building in the West Bank, and $C_2$ may be the clause requiring Palestinians to officially recognise Israel, both changes in the status quo. The absence of such clauses in our peace agreements  represents continuations of the status quo, i.e. continuation of settlement building and no recognition of Israel, resp. Note that the absence of a clause \AR{$C_i$ from a peace agreement $P_j$, i.e. $C_i \nin P_j$,} means 
\AR{that the peace agreement contains the negation of the clause $C_i$, i.e. the continuation of the status quo}.
For example, if we have $\deal_i, \deal_j \in \Deals$, where $\deal_i = \emptyset$ and $\deal_j = \Clauses$, $\deal_i$ represents an agreement with maximal continuation of the status quo (i.e. no change from status quo) and $\deal_j$ represents an agreement with maximal change from the status quo (i.e. all clauses 
\AR{representing} changes in the status quo \AR{are contained in $P_j$}). 
For each clause, a citizen $\user_i$ either endorses it or does not. This partitions the set of clauses such that $\Clauses = \BenClauses^{i} \cup \ConClauses^{i}$ with $\BenClauses^{i} \cap \ConClauses^{i} = \emptyset$, where $\BenClauses^{i}$ is the set of clauses $\user_i$ endorses and $\ConClauses^{i}$ the set they do not. We treat endorsement as binary, so a citizen is never undecided or indifferent about a clause.

\paragraph{Quantitative Argumentation}
\cut{As a formal means for representing and analysing citizens' reasoning, w}
We use the notion of a 
QBAF \cut{\cite{Amgoud_18,Baroni_18}, a quadruple }$\langle \Args, \Atts, \Supps, \BS  \rangle$ where: 
$\Args$ is a finite set of \emph{arguments}
\cut{, i.e. statements which a citizen may reason about, e.g. peace agreements, clauses therein or reasoning thereon}; 
%
$\Atts \!\subseteq\! \Args \!\times\! \Args$ ($\Supps \!\subseteq\! \Args \!\times\! \Args$) is a binary, directed relation of \emph{attack} (support, resp.) between arguments\cut{, representing negative influences on the acceptance of the latter argument;}%
%
\cut{$\Supps \subseteq \Args \times \Args$  is a binary, directed relation of \emph{support} between arguments, representing positive influences on the acceptance of the latter argument;}, where $\Atts$ and $\Supps$ are disjoint; 
$\BS\!:\! \Args \!\rightarrow\! [0,\!1]$ ascribes \emph{base scores} 
to arguments, representing their intrinsic acceptabilities\cut{ of an argument in isolation, i.e. without considering attacks and supports from other arguments}.\footnote{Note that in \cite{Baroni_18}, base scores 
are defined for more general preorders, but here, for simplicity and in line with the majority of existing work, we restrict to $[0,1]$.} 
For any argument $x_i \!\in\! \Args$, we use $\Atts(x_i) \!=\! \{ x_j \!\in\! \Args | (x_j, x_i) \!\in\! \Atts \}$ to denote $x_i$'s set of \emph{attackers}\cut{, i.e. reasons \textit{against} 
$x_i$}, and $\Supps(x_i) \!=\! \{ x_j \!\in\! \Args | (x_j, x_i) \!\in\! \Supps \}$ to denote $x_i$'s set of \emph{supporters}\cut{, i.e. reasons \textit{for} 
$x_i$}.
\cut{The remaining ingredient is a \emph{semantics}: a way to compute how acceptable each argument is once its attackers and supporters have been taken into account. For this w}We deploy gradual semantics, denoted by $\GS$, which, for a given QBAF $\QBAF = \langle \Args, \Atts, \Supps, \BS  \rangle$, 
\cut{take each argument's base score and the strengths of the arguments attacking and supporting it, and }assigns 
each argument $x_i \in \Args$ a \emph{strength} 
$\GS(\QBAF, x_i) \in [0,1]$ \AR{representing its acceptability}. 
In the remainder of this section, we assume as given a generic QBAF $\QBAF = \langle \Args, \Atts, \Supps, \BS \rangle$ with a gradual semantics $\GS$.
\cut{
\cut{We will use two different gradual semantics in this paper.}
The \emph{QEM semantics}\footnote{We 
define a simplified 
version here for 
acyclic graphs.} 
\cite{Potyka_18} is a gradual semantics such that for any $x_i \in \Args$,
    $\GS(\QBAF,x_i) = \BS(x_i) + (1 - \BS(x_i)) \cdot h(E_{x_i}) -  \BS(x_i) \cdot h(-E_{x_i})$ 
where $E_{x_i} = \sum_{x_j \in \Supps(x_i)}{\GS(\QBAF,x_j)} - \sum_{x_k \in \Atts(x_i)}{\GS(\QBAF,x_k)}$ and for all $v \in \mathbb{R}$, $h(v) = \frac{\max\{v,0\}^2}{1+\max\{v,0\}^2}$.
The \emph{DF-QuAD semantics} 
\cite{Rago_16}, which we use in the experiments only, is defined in the supplementary material.
Gradual semantics' suitability for applications is typically determined using their theoretical properties. Some of these properties are defined parametrically based on a comparison measure between sets of arguments' strengths. In this paper, we opt for one such measure (without loss of generality) which discounts arguments with zero strength, $\geq_\GS$, as in \cite{Baroni_19}.
Formally, for $X \subseteq \Args$, we define a function returning a multiset by removing zero strength attackers $z(X) = \{\GS(\QBAF,x_i) | x_i \in X, \GS(\QBAF,x_i) \neq 0 \}$.
Then, we use a comparison measure such that for $A,B \subseteq \Args$, we denote: $A =_\GS B$ iff $z(A) = z(B)$; $A \geq_\GS B$ iff there exists an injective mapping $f$ from $z(B)$ to $z(A)$ such that $\forall x_i \in z(B)$, $\GS(\QBAF,f(x_i)) \geq \GS(\QBAF,x_i)$; and $A >_\GS B$, iff $A \geq_\GS B$ and $B \not{\geq_\GS} A$. 
}
\cut{
The \emph{DF-QuAD semantics} \cite{Rago_16} is a gradual semantics such that for any $x_i \in \Args$, 
    $\GS(\QBAF,x_i) = c(\BS(x_i),\Sigma(\GS(\QBAF, \Atts(x_i))),\Sigma(\GS(\QBAF,\Supps(x_i))))$ 
where, for any $S \subseteq \Args$, $\GS(\QBAF,S)=(\GS(\QBAF,x_1),\ldots,\GS(\QBAF,x_k))$ for $(x_1,\ldots,x_k)$, an arbitrary permutation of $S$, and: 
    %
    $\Sigma$ is such that $\Sigma(())=0$, where $()$ is an empty sequence, and, for $v_1,\ldots,v_n \in [0,1]$ ($n \geq 1$), 
    if $n=1$, then $\Sigma((v_1))=v_1$; if $n=2$, then $\Sigma((v_1,v_2))= v_1 + v_2 - v_1\cdot v_2$; and 
    if $n>2$, then $\Sigma((v_1,\ldots,v_n)) = \Sigma (\Sigma((v_1,\ldots, v_{n-1})),v_n)$; 
    %
    $c$ is such that, for $v^0,v^-,v^+ \in [0,1]$,
    if $v^-\geq v^+$, then $c(v^0,v^-,v^+)=v^0-v^0\cdot| v^+ - v^-|$ and
    if $v^-< v^+$, then $c(v^0,v^-,v^+)=v^0+(1-v^0)\cdot| v^+ - v^-|$.
    %
Meanwhile, the \emph{QEM semantics}\footnote{We 
define a simplified gradual semantics here for the case of acyclic graphs.} \cite{Potyka_18} is a gradual semantics such that for any $x_i \in \Args$,
    $\GS(\QBAF,x_i) = \BS(x_i) + (1 - \BS(x_i)) \cdot h(E_{x_i}) -  \BS(x_i) \cdot h(-E_{x_i})$ 
where $E_{x_i} = \sum_{x_j \in \Supps(x_i)}{\GS(\QBAF,x_j)} - \sum_{x_k \in \Atts(x_i)}{\GS(\QBAF,x_k)}$ and for all $v \in \mathbb{R}$, $h(v) = \frac{\max\{v,0\}^2}{1+\max\{v,0\}^2}$.
}
Gradual semantics' suitability for specific applications is examined using their theoretical properties. Some of these properties are defined parametrically based on a comparison measure between sets of arguments' strengths. In this paper, we opt for one such measure (without loss of generality) which discounts arguments which zero strength, $\geq_\GS$, as in \cite{Baroni_19}.
Formally, for $X \subseteq \Args$, we define a function returning a multiset by removing zero strength attackers $z(X) = \{\GS(\QBAF,x_i) | x_i \in X, \GS(\QBAF,x_i) \neq 0 \}$.
Then, we use a comparison measure such that for $A,B \subseteq \Args$, we denote: $A =_\GS B$ iff $z(A) = z(B)$; $A \geq_\GS B$ iff there exists an injective mapping $f$ from $z(B)$ to $z(A)$ such that $\forall x_i \in z(B)$, $\GS(\QBAF,f(x_i)) \geq \GS(\QBAF,x_i)$; and $A >_\GS B$, iff $A \geq_\GS B$ and $B \not{\geq_\GS} A$. 
Any $\GS$ satisfies \emph{balance} \cite{Baroni_18} iff $\forall x_i \in \Args$: 
if $\Atts(x_i) =_\GS \Supps(x_i)$ then $\GS(\QBAF,x_i) = \BS(x_i)$;
if $\Atts(x_i) >_\GS \Supps(x_i)$ then $\GS(\QBAF,x_i) \leq \BS(x_i)$; and 
if $\Atts(x_i) <_\GS \Supps(x_i)$ then  $\GS(\QBAF,x_i) \geq \BS(x_i)$.
Any $\GS$ satisfies \emph{monotonicity} \cite{Baroni_18} iff $\forall x_i, x_j \in \Args$: 
if $\BS(x_i) = \BS(x_j)$, $\Atts(x_i) =_\GS \Atts(x_j)$ and $\Supps(x_i) =_\GS \Supps(x_j)$, then $\GS(\QBAF,x_i) = \GS(\QBAF,x_j)$; and 
if $\BS(x_i) \leq \BS(x_j)$, $\Atts(x_i) \geq_\GS \Atts(x_j)$ and $\Supps(x_i) \leq_\GS \Supps(x_j)$, then $\GS(\QBAF,x_i) \leq \GS(\QBAF,x_j)$.
Any $\GS$ satisfies \emph{strict monotonicity} \cite{Baroni_18} iff $\GS$ satisfies monotonicity and $\forall x_i, x_j \in \Args$ such that $\BS(x_i) \leq \BS(x_j)$, $\Atts(x_i) \geq_\GS \Atts(x_j)$ and $\Supps(x_i) \leq_\GS \Supps(x_j)$, and at least one of these relations is strict, then $\GS(\QBAF,x_i) < \GS(\QBAF,x_j)$.
Any $\GS$ satisfies \emph{duality} \cite{Potyka_18} iff $\forall x_i, x_j \in \Args$ such that $\BS(x_i) = 1 - \BS(x_j)$, $\Atts(x_i) = \Supps(x_j)$ and $\Supps(x_i) = \Atts(x_j)$, $\GS(\QBAF,x_i) = 1- \GS(\QBAF,x_j)$.
In this paper, we assess the suitability of two of the most popular gradual semantics, \emph{DF-QuAD} \cite{Rago_16} and \emph{QEM} \cite{Potyka_18}. 
Both of these semantics satisfy monotonicity, balance and duality, but only QEM satisfies strict monotonicity \cite{Baroni_19,Potyka_24}.

\section{Representing Citizens' Reasoning on \\ Peace Agreements with QBAFs}
\label{sec:main}

In this section, we define a framework for representing citizens' reasoning about the acceptability of agreements\cut{ built from sets of clauses (§\ref{ssec:QBAF_def})}, before undertaking theoretical analysis to identify which properties characterise desirable gradual semantics in this setting.\cut{, allowing us to select suitable gradual semantics (§\ref{ssec:QBAF_theory}). For each formal definition, we give an intuition and an example}

Our framework is defined as follows.

\cut{
\subsection{Formal Definitions} 
\label{ssec:QBAF_def}
}




\begin{figure*}[ht]
    \centering
    \includegraphics[width=0.9\linewidth]{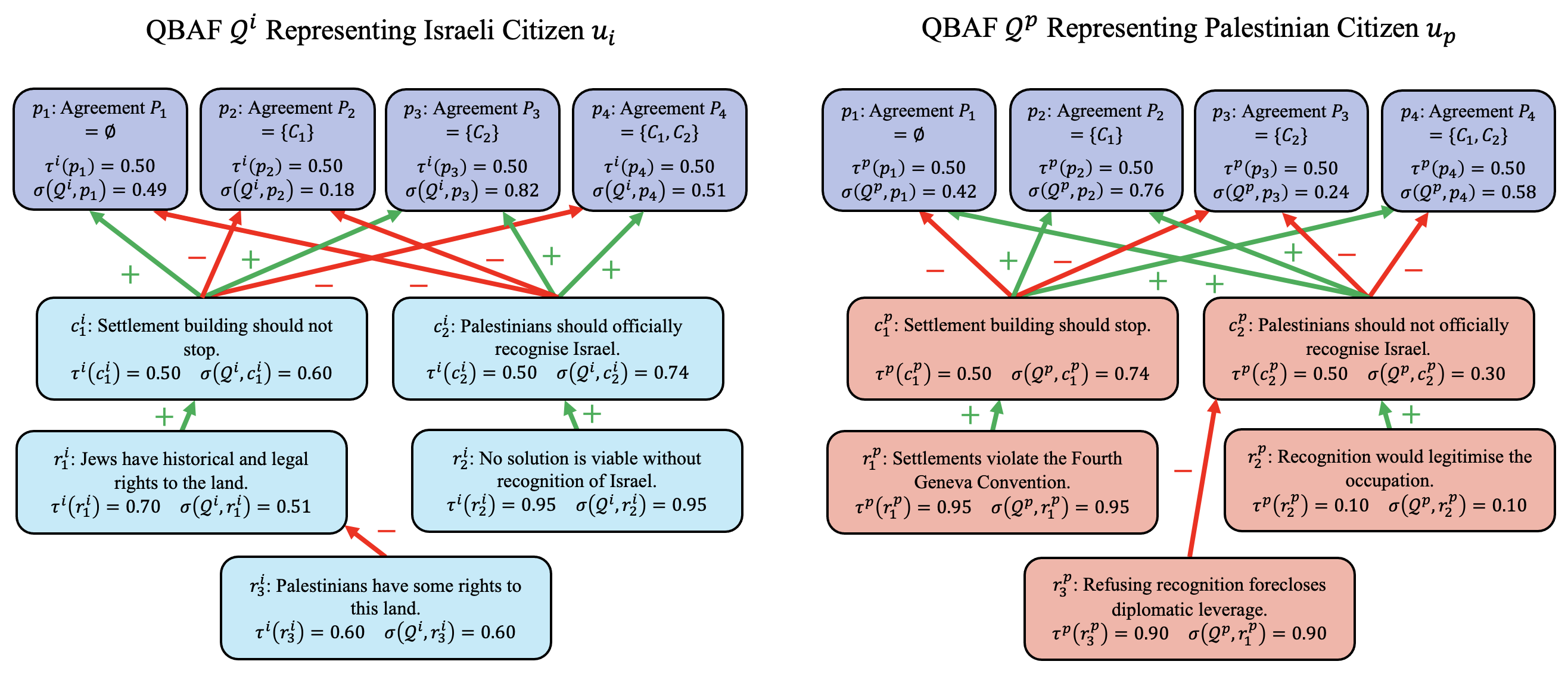}
    \caption{Two QBAFs $\QBAF^i = \langle \Args^i, \Atts^i, \Supps^i, \BS^i \rangle$ (left) and $\QBAF^p  = \langle \Args^p, \Atts^p, \Supps^p, \BS^p \rangle$ (right) representing the reasoning of users $\user_i$ and $\user_p$, resp., on $\Deals = \{ P_1, P_2, P_3, P_4 \}$, in which $\Clauses = \{ C_1, C_2 \}$ where $C_1 =$ \emph{Settlement building stops} and $C_2 =$ \emph{Palestinians officially recognise Israel}, $\BenClauses^i = \ConClauses^j = \{ C_2 \}$, $\ConClauses^i = \BenClauses^j = \{ C_1 \}$, $\ArgsP^i = \ArgsP^p = \{ p_1, p_2, p_3, p_4 \}$, $\ArgsC^i = \{ c_1^i, c_2^i \}$, $\ArgsC^p = \{ c_1^p, c_2^p \}$, $\ArgsR^i = \{ r_1^i, r_2^i, r_3^i \}$ and $\ArgsR^p = \{ r_1^p, r_2^p, r_3^p \}$. 
    Arguments are represented by nodes, attacks by red edges labelled ``$-$'' and supports by green edges labelled ``$+$''.
    \AR{The QEM gradual semantics is used to calculate argument strengths.}}
    \label{fig:framework_example}
\end{figure*}

\begin{definition}\cut{[QBAF representing $\user_i$'s reasoning on $\Deals$]}
\label{def:QBAF}
	Given a set of peace agreements $\Deals$ and a citizen $\user_i \in \Users$, a \emph{QBAF representing $\user_i$'s reasoning on $\Deals$} is a QBAF $\QBAF = \langle \Args, \Atts, \Supps, \BS \rangle$ with gradual semantics $\GS$ such that: 
	\begin{enumerate}
		\item{$\Args = \ArgsP \cup \ArgsC \cup \ArgsR$, where:}
        \begin{itemize}
            \item{$\ArgsP$ is the set of \emph{agreement arguments} where $ |\ArgsP| = |\Deals|$ and $\forall \deal_j \in \Deals$, $\exists p_j \in \ArgsP$;} 
            \item{$\ArgsC$ is the set of \emph{clause arguments}  where $ |\ArgsC| = |\Clauses|$, $\forall C_j \in \Clauses$, $\exists c_j \in \ArgsC$ and $\ArgsC = \ArgsCP \cup \ArgsCN$ such that:}
            \begin{itemize}
                \item $|\ArgsCP| = |\BenClauses|$ and $\forall C_j \in \BenClauses$, $\exists c_j \in \ArgsCP;$
                \item $|\ArgsCN| = |\ConClauses|$ and $\forall C_j \in \ConClauses$, $\exists c_j \in \ArgsCN;$
            \end{itemize}
            \item{$\ArgsR$ is the set of \emph{reasoning arguments};} 
        \end{itemize}
        \item{$\Atts \! \subseteq \! ( \ArgsC \!\times\! \ArgsP) \! \cup\! ( \ArgsR \!\times\! (\ArgsR \!\cup\! \ArgsC))$ and  \\$\Supps \!\subseteq\! ( \ArgsC \!\times\! \ArgsP) \!\cup\! ( \ArgsR \!\times\! (\ArgsR \!\cup\! \ArgsC))$ where:} 
        %
        \begin{itemize}
            \item $\forall c_j \in \ArgsCP$, $\forall p_k \in \ArgsP$, $(c_j,p_k) \in \Atts$ iff $C_j \nin \deal_k$;
            \item $\forall c_j \in \ArgsCP$, $\forall p_k \in \ArgsP$, $(c_j,p_k) \in \Supps$ iff $C_j \in \deal_k$;
            \item $\forall c_j \in \ArgsCN$, $\forall p_k \in \ArgsP$, $(c_j,p_k) \in \Atts$ iff $C_j \in \deal_k$;
            \item $\forall c_j \in \ArgsCN$, $\forall p_k \in \ArgsP$, $(c_j,p_k) \in \Supps$ iff $C_j \nin \deal_k$;
        \end{itemize}
        %
        \item $\BS(p_j) = 0.5$ $\forall p_j \in \ArgsP$.
        %
        %
	\end{enumerate}
\end{definition}

The intuition for each of the points above is as follows.
(1)~Our framework represents the agreements, the clauses and reasoning thereon as arguments. (2) The clause arguments represent $\user_i$'s opinion of whether the corresponding clause \emph{should} or \emph{should not} happen, based on whether the clause is in $\BenClauses^i$ or $\ConClauses^i$, resp. (3) Attacks and supports are such that an argument representing a clause with (without) endorsement from $\user_i$ supports (attacks, resp.) arguments representing agreements that contain the clause, and attacks (supports, resp.) arguments representing agreements that do not contain the clause. 
Meanwhile, (arguments representing\footnote{We may informally refer to agreement/clause/reasoning arguments as agreements/clauses/reasoning, resp., where it is clear we are referring to the QBAF \ARR{and not the entities being represented}.}) reasoning may attack or support clauses or other reasoning. 

Base scores of the agreements are fixed to $0.5$, the midpoint of the range, representing neutral prior acceptance. 
In this paper, we assume that $\BS(c_j) = 0.5$ $\forall c_j \in \ArgsC$, other choices are discussed in §\ref{sec:conclusions} as directions for future work.
In line with other works \cite{Rago_16,Tarle_22}, we limit to acyclic QBAFs in this paper, i.e. not allowing for circular reasoning from citizens, though Definition \ref{def:QBAF} has no such restriction. While we note that $|\Deals|$ is combinatorial in $|\Clauses|$, we take the logical first step of evaluating of all possible agreements, leaving to future work the investigation of algorithms for improved scaling, and noting that strengths in the gradual semantics studied here can be computed in linear time for acyclic graphs \cite{Potyka_19}.

Next, we introduce a ranking over the peace agreements\cut{ that is induced by the gradual semantics in the QBAFs}.

\begin{definition}\cut{[Argumentative Ranking]}
\label{def:ranking}
    Given a set of peace agreements $\Deals$, a citizen $\user_i \in \Users$ and a QBAF representing $\user_i$'s reasoning on $\Deals$, $\QBAF = \langle \Args, \Atts, \Supps, \BS \rangle$ with $\GS$, an \emph{argumentative ranking by $\QBAF$ and $\GS$} is a total ordering over $\Deals$, $\preceq_\GS^\QBAF$, such that $\forall \deal_i, \deal_j \in \Deals$, $\deal_i \simeq_\GS^\QBAF \deal_j$ iff $\GS(\QBAF,p_i) = \GS(\QBAF,p_j)$ and $\deal_i \prec_\GS^\QBAF \deal_j$ iff $\GS(\QBAF,p_i) < \GS(\QBAF,p_j)$. 
\end{definition}

Intuitively, argumentative rankings order the peace agreements based on their strengths, giving a ranking based on their acceptabilities within the QBAF.

Figure \ref{fig:framework_example} gives two examples of QBAFs representing the reasoning of a hypothetical Israeli (left) and a hypothetical Palestinian (right) citizen (superscript $i$ and $p$, resp.). Note that here, $\BenClauses^i = \ConClauses^p = \{ C_2 \}$ and $\ConClauses^i = \BenClauses^p = \{ C_1 \}$, meaning $\user_i$ and $\user_p$ disagree on both of the two clauses, resulting in opposite attack and support relations between the corresponding clause and agreement arguments. The argumentative rankings representing the two citizens' perspectives are $\deal_2 \prec_\GS^{\QBAF^i} \deal_1 \prec_\GS^{\QBAF^i} \deal_4 \prec_\GS^{\QBAF^i} \deal_3$ and $\deal_3 \prec_\GS^{\QBAF^p} \deal_1 \prec_\GS^{\QBAF^p} \deal_4 \prec_\GS^{\QBAF^p} \deal_2$.
Though at first glance, it seems that there is no common ground between the two citizens since they endorse completely different clauses, the argumentative ranking demonstrates that some compromise may be found between the two\cut{ based on the argumentative reasoning}, i.e. agreement $P_4$ in this case. In 
\ARR{the remainder of the paper}, we demonstrate how a set of agreements which is mutually acceptable to both parties can be identified by applying gradual semantics in a principled manner, i.e. ensuring that they satisfy certain properties, and then merging the QBAFs.

\cut{
\subsection{Theoretical Analysis}
\label{ssec:QBAF_theory}
}

We will now assess the behaviour of gradual semantics, as defined by their theoretical properties.
\cut{We will show that the properties of balance, monotonicity, and duality are essential properties in our QBAF framework: they guarantee that semantics respond sensibly when arguments are in conflict, the central modelling challenge in peace agreement design, and one that recurs in any setting with contested opinions.}
The notation in this section uses a generic QBAF $\QBAF = \langle \Args, \Atts, \Supps, \BS \rangle$ with gradual semantics $\GS$ representing the reasoning of a citizen $\user_i \in \Users$ on agreements $\Deals$. 
When comparing QBAFs for different citizens, we use superscripts: we refer to the $\QBAF$ \ARR{for $\user_i$} as $\QBAF^i = \langle \Args^i, \Atts^i, \Supps^i, \BS^i \rangle$, and to any clause or reasoning argument therein as $c_j^i \in \ArgsC^i$ and $r_k^i \in \ArgsR^i$, resp. Agreement arguments are not assigned superscripts as the same set of agreements is present for all 
citizens.
\ARR{We first consider each of the properties mentioned in §\ref{sec:preliminaries} in turn.}

\textbf{Balance} \ARR{requires that if an argument's attackers are stronger than its supporters, then the argument's strength should be less than or equal to its base score, and vice versa. A violation of balance would create an inconsistency within our setting, e.g. in Figure \ref{fig:framework_example} if $p_2$ were assigned a higher strength than its base score in $\QBAF^i$ when it has stronger opposition than support. We thus believe balanced semantics are essential for intuitive interpretations of citizens' reasoning}.

\textbf{(Strict) Monotonicity} \ARR{requires that increasing the base score, removing/weakening the attackers or adding/strengthening the supporters of an argument can only increase (always increases, resp.) its strength, and vice versa.
These properties thus guarantee an intuitive monotonic relationship between an argument's strength and its attackers, supporters and base score. For example, in Figure \ref{fig:framework_example}, for $\QBAF^p$ we would expect that increasing the strength of $c_2^p$ (i.e. increasing $\user_p$'s negative sentiment towards $C_2$, a clause $P_1$ does not contain) or decreasing the strength of $c_1^p$ (i.e. decreasing $\user_p$'s positive sentiment towards $C_1$, a clause $P_1$ does not contain) could only strengthen $p_1$ (which represents $P_1$). Whether the stronger condition, strict monotonicity, is required, i.e. $p_1$ is always strengthened under these changes, or whether the weaker condition is sufficient, is a question we address in §\ref{sec:merging}}.

\textbf{Duality} \ARR{requires that two arguments which are ``mirror images'' of one another, in terms of its base score, attackers and supporters, should have strengths which are also mirrored about the midpoint $0.5$ of the $[0,1]$ scale.
For example, in Figure \ref{fig:framework_example}, from the Israeli citizen's perspective ($\QBAF^i$), if we take a pair of arguments which have complementary base scores, attackers and supporters, e.g. $p_1$ and $p_4$, it must be the case that $\GS(\QBAF^i,p_1) = 1-\GS(\QBAF^i,p_4)$, given that $p_1$ and $p_4$'s attackers, supporters and base scores are complements of one another. Likewise for $p_2$ and $p_3$, and for the same argument pairs from the Palestinian citizen's perspective ($\QBAF^p$).
We thus require duality because clause endorsement and non-endorsement are constructed as exact mirrors, and we have no principled reason to break that symmetry in the semantics}.


\cut{The first property \cite{Baroni_18} concerns the relative strengths of an argument's attackers compared to those of its supporters.\footnote{For all properties, the opposite conditions, e.g. weakening instead of strengthening, or changing a supporter to an attacker, will trivially give the opposite effect.}}

\cut{
\begin{definition}\cut{[Balance, \cite{Baroni_18}]}
    Any $\GS$ satisfies \emph{balance} iff $\forall x_i \!\!\in\!\! \Args$: 
if $\Atts(x_i) \!\!=_\GS\!\! \Supps(x_i)$ then $\GS(\!\QBAF,x_i) \!\!=\!\! \BS(x_i)$;
if $\Atts(x_i) \!\!>_\GS\!\! \Supps(x_i)$ then $\GS(\!\QBAF,x_i) \!\!\leq\!\! \BS(x_i)$; and 
if $\Atts(x_i) \!\!<_\GS\!\! \Supps(x_i)$ then  $\GS(\!\QBAF,x_i) \!\!\geq\!\! \BS(x_i)$.
\end{definition}
}

\cut{
Intuitively, this property requires an argument's strength to 
correspond to the balance between the strengths of its attackers and supporters
. If the argument's attackers are stronger than its supporters, the argument's strength will be less than or equal to its base score; and vice versa\cut{, if the argument's supporters are stronger than its attackers, the argument's strength will be greater than or equal to its base score}.
\cut{
\begin{example}
    In Figure 1, from the Israeli citizen's perspective ($\QBAF^i$), agreement $p_2$ faces stronger attackers than supporters since it is attacked by both clause arguments ($\Atts^i(p_2) = \ArgsC^i$ and it has no supporters ($\Supps^i(p_2) = \emptyset$). If $\GS$ satisfies balance, $p_2$'s acceptability after all arguments in the QBAF are taken into account must be lower than (or equal to) its base score, i.e. $\GS(\QBAF^i,p_2) \leq \BS^i(p_2)$. Conversely, from the Palestinian citizen's perspective ($\QBAF^p$), $p_2$'s attackers are weaker than its supporters $\Atts^p(p_2) = \emptyset$ and $\Supps^p(p_2) = \ArgsC^p$, so balance requires its strength to be higher than (or equal to) its base score, $\GS(\QBAF^p,p_2) \geq \BS^p(p_2)$. 
\end{example}
}
Clearly, a violation of balance would create an inconsistency within our setting\cut{: the framework may assign a higher strength to an argument with stronger opposition than support, or lower strength to an argument despite stronger support than opposition.}
and so we believe that the behaviour guaranteed by balance is essential for intuitive interpretations of citizens' reasoning.
}

\cut{
The next property \cite{Baroni_18} concerns behaviour when arguments are strengthened or weakened.
\begin{definition}\cut{[(Strict) Monotonicity, \cite{Baroni_18}]}
    Any $\GS$ satisfies \emph{monotonicity} iff $\forall x_i, x_j \in \Args$: if $\BS(x_i) = \BS(x_j)$, $\Atts(x_i) =_\GS \Atts(x_j)$ and $\Supps(x_i) =_\GS \Supps(x_j)$, then $\GS(\QBAF,x_i) = \GS(\QBAF,x_j)$; and if $\BS(x_i) \leq \BS(x_j)$, $\Atts(x_i) \geq_\GS \Atts(x_j)$ and $\Supps(x_i) \leq_\GS \Supps(x_j)$, then $\GS(\QBAF,x_i) \leq \GS(\QBAF,x_j)$. Further, $\GS$ satisfies \emph{strict monotonicity} iff, under the stated conditions with 
    at least one of the relations being strict, it is the case that $\GS(\QBAF,x_i) < \GS(\QBAF,x_j)$. Trivially, if $\GS$ satisfies strict monotonicity, then it also satisfies monotonicity.
\end{definition}
Intuitively, if (strict) monotonicity is satisfied, increasing the base score, removing/weakening attackers or adding/strengthening supporters of an argument can only increase (always increases, resp.) its strength, and vice versa.
\cut{
\begin{example}
    In Figure \ref{fig:framework_example}, from the Israeli citizen's perspective ($\QBAF^i$), if $\GS$  satisfies (strict) monotonicity, then increasing $c^i_1$'s strength $\GS(\QBAF^i,c_1^i)$ can only increase (always increases, resp.) agreement $p_1$'s strength, $\GS(\QBAF^i,p_1)$, since $c^i_1$ supports $p_1$. Similarly, from the Palestinian citizen's perspective ($\QBAF^p$), decreasing $c^p_1$'s strength can only increase (always increases, resp.) agreement $p_1$'s strength, since $c^p_1$ attacks $p_1$.
\end{example}
}
These properties thus guarantee an intuitive monotonic 
relationship between an argument's strength and its attackers, supporters and base score, 
without which the resulting strength values would be difficult to justify or interpret.
Whether the stronger condition, strict monotonicity, is necessary, or whether the weaker condition is sufficient, is a question we address in §\ref{sec:merging}.
}

\cut{
The next property \cite{Potyka_18} shows that arguments which have complementary base scores, attackers and supporters, also have complementary strengths.\footnote{All proofs are given in the supplementary material. It should be noted that alternative set comparison measures (i.e. replacing $\leq_\GS$) may be adopted, which would result in different guarantees potentially tailored to \ARR{more} specific settings.}
\begin{definition}\cut{[Duality, \cite{Potyka_18}]}
    Any $\GS$ satisfies \emph{duality} iff $\forall x_i, x_j \in \Args$ such that $\BS(x_i) = 1 - \BS(x_j)$, $\Atts(x_i) = \Supps(x_j)$ and $\Supps(x_i) = \Atts(x_j)$, $\GS(\QBAF,x_i) = 1- \GS(\QBAF,x_j)$. 
\end{definition}
\begin{corollary}
\label{cor:P_mirror}
    For any $p_j, p_k \!\in\! \ArgsP$, if $\Atts(p_j) \!=\! \Supps(p_k)$, $\Supps(p_j) \!=\! \Atts(p_k)$ and $\GS$ satisfies duality, then $\GS(\QBAF,p_j) = 1-\GS(\QBAF,p_k)$.
\end{corollary}
%
%
Intuitively, an argument being the ``mirror image'' of another, in terms of its base score, attackers and supporters, should have strengths which are also mirrored about the midpoint $0.5$ of the $[0,1]$ scale.
Corollary \ref{cor:P_mirror} demonstrates that the base score condition holds by default for the agreement arguments, since their base scores are fixed at $0.5$.}
\cut{
\begin{example}
In Figure \ref{fig:framework_example}, if $\GS$ satisfies duality, then from the Israeli citizen's perspective ($\QBAF^i$), if we take a pair of arguments which have complementary base scores, attackers and supporters, e.g. $p_1$ and $p_4$, it must be the case that $\GS(\QBAF^i,p_1) = 1-\GS(\QBAF^i,p_4)$
, given that $p_1$ and $p_4$'s 
attackers, supporters and base scores are complements of one another. Likewise for $p_2$ and $p_3$, and for the same arguments pairs from the Palestinian citizen's perspective ($\QBAF^p$).
\end{example}
}
\cut{We require duality because clause endorsement 
and non-endorsement are constructed as exact mirrors, and we have no principled reason to break that symmetry at the semantics level.} 

\ARR{Next, we give some theoretical results that further justify monotonicity and duality. First,}
Corollary \ref{cor:P_mirror} shows that the intuitive base score condition holds by default for agreement arguments as their base scores are fixed at $0.5$.\footnote{All proofs are given in the supplementary material.\cut{ It should be noted that alternative set comparison measures (i.e. replacing $\leq_\GS$) may be adopted, which would result in different guarantees potentially tailored to \ARR{more} specific settings.}}

\begin{corollary}
\label{cor:P_mirror}
    For any $p_j, p_k \!\in\! \ArgsP$, if $\Atts(p_j) \!=\! \Supps(p_k)$, $\Supps(p_j) \!=\! \Atts(p_k)$ and $\GS$ satisfies duality, then $\GS(\QBAF,p_j) = 1-\GS(\QBAF,p_k)$.
\end{corollary}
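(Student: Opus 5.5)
The plan is to apply the definition of duality directly to the pair $p_j, p_k$. Once its three antecedent conditions are verified, the conclusion $\GS(\QBAF,p_j) = 1-\GS(\QBAF,p_k)$ follows at once. The two relational conditions, $\Atts(p_j) = \Supps(p_k)$ and $\Supps(p_j) = \Atts(p_k)$, are exactly the hypotheses of Corollary \ref{cor:P_mirror}, so they need no further argument. What remains is the base score condition $\BS(p_j) = 1 - \BS(p_k)$.

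For that condition, I will use item 3 of Definition \ref{def:QBAF}. It fixes $\BS(p) = 0.5$ for every $p \in \ArgsP$, so $\BS(p_j) = 0.5 = 1 - 0.5 = 1 - \BS(p_k)$. Since $0.5$ is the unique fixed point of $v \mapsto 1-v$, this mirroring holds for every pair of agreement arguments, with no assumption beyond membership in $\ArgsP$. This is the sense in which the corollary says the base score condition holds ``by default''.

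Duality is stated as a universally quantified property over all $x_i, x_j \in \Args$. Since $p_j, p_k \in \ArgsP \subseteq \Args$, I will instantiate it with $x_i = p_j$ and $x_j = p_k$ and discharge its three antecedents with the facts above. There is no substantive obstacle here. The only point needing care is the quantifier bookkeeping: the property is applied inside the single QBAF $\QBAF$, so both strengths are computed by the same $\GS$ on the same $\QBAF$. I will also note that the hypothesis imposes no extra condition on the attackers and supporters themselves, for instance on the strengths of the clause arguments. The set equalities are literal equalities of argument sets, which is exactly the form duality requires.

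As a sanity check, I will remark that in the setting of Definition \ref{def:QBAF} the hypothesis holds whenever $\deal_k = \Clauses \setminus \deal_j$ and the only relations into agreement arguments come from clause arguments. Each clause argument then attacks exactly one of $p_j, p_k$ and supports the other. This covers the $p_1,p_4$ and $p_2,p_3$ pairs of Figure \ref{fig:framework_example}. The remark is illustrative only and is not needed for the proof.
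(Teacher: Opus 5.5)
Your proposal is correct and matches the paper's proof. Both note that item 3 of Definition~\ref{def:QBAF} fixes every agreement argument's base score at $0.5$, so $\BS(p_j) = 1-\BS(p_k)$ holds automatically, and the conclusion then follows by instantiating duality with the two given relational hypotheses (your closing remark about complementary agreements is a correct but optional illustration).
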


 \cut{We further demonstrate the necessity of monotonicity being satisfied by gradual semantics through two implications, the first of which} 
 \ARR{Our next result} concerns the attackers and supporters of peace agreements, and thus the endorsement of their claims.

\begin{lemma}
\label{lem:clause}
    For any $p_j, p_k \in \ArgsP$, if $\Supps(p_j) \supset \Supps(p_k)$ (and thus $\Atts(p_j) \subset \Atts(p_k)$) and $\GS$ satisfies monotonicity, then $\GS(\QBAF,p_j) \geq \GS(\QBAF,p_k)$. 
\end{lemma}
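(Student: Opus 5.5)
The plan is to apply the second clause of monotonicity directly, with $x_i := p_k$ and $x_j := p_j$. Monotonicity then gives $\GS(\QBAF,p_k) \leq \GS(\QBAF,p_j)$ as soon as three premises hold: $\BS(p_k) \leq \BS(p_j)$, $\Atts(p_k) \geq_\GS \Atts(p_j)$, and $\Supps(p_k) \leq_\GS \Supps(p_j)$. The base score condition is immediate from Definition~\ref{def:QBAF}, since every agreement argument has base score $0.5$. The remaining work is to obtain the two set comparisons from the inclusions in the hypothesis.

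First, I would make the parenthetical ``and thus'' precise, since it is where the structure of Definition~\ref{def:QBAF} enters. In that definition, every clause argument $c_m \in \ArgsC$ either attacks or supports each $p \in \ArgsP$, depending on whether $C_m$ is endorsed and whether $C_m \in \deal$. The attack and support relations are disjoint, and only clause arguments target agreement arguments. So for every $p \in \ArgsP$, $\Atts(p)$ and $\Supps(p)$ partition $\ArgsC$. Hence $\Supps(p_j) \supset \Supps(p_k)$ yields $\Atts(p_j) = \ArgsC \setminus \Supps(p_j) \subset \ArgsC \setminus \Supps(p_k) = \Atts(p_k)$.

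Second, I would show that set inclusion implies the comparison $\geq_\GS$. If $A \supseteq B$, then $z(B)$ is a sub-multiset of $z(A)$: zero-strength elements are removed from both, and the remaining elements of $B$ occur in $A$ with the same strength, because strength is a property of the argument and not of which set it is viewed in. The identity embedding is therefore an injective map $f$ from $z(B)$ to $z(A)$ with $\GS(\QBAF,f(x)) = \GS(\QBAF,x) \geq \GS(\QBAF,x)$, so $A \geq_\GS B$. Applying this with $A = \Supps(p_j)$, $B = \Supps(p_k)$ gives $\Supps(p_k) \leq_\GS \Supps(p_j)$. Applying it with $A = \Atts(p_k)$, $B = \Atts(p_j)$ gives $\Atts(p_k) \geq_\GS \Atts(p_j)$.

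The main obstacle is this second step. It is a matter of care rather than depth: one has to read the paper's informal definition of $\geq_\GS$, which moves between multisets of values and arguments, in a way that makes the identity map legitimate. Zero-strength elements are simply discarded, so they cause no difficulty, and strictness of the inclusions is not needed. With all three premises established, monotonicity gives $\GS(\QBAF,p_j) \geq \GS(\QBAF,p_k)$.
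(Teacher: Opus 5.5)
Your proposal is correct and follows essentially the same route as the paper: equal base scores of $0.5$, the fact that each agreement argument's attackers and supporters partition $\ArgsC$, and a direct application of monotonicity. You also spell out why set inclusion yields the required $\geq_\GS$ comparisons (via the identity embedding on the zero-discounted multisets), a step the paper leaves implicit.
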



This result shows that an agreement containing more endorsed clauses and fewer non-endorsed clauses\cut{, i.e. that has more supporters (and thus fewer attackers),} will be 
more acceptable. 
\ARR{For example, in Figure \ref{fig:framework_example}, in $\QBAF^i$ we expect that $\GS(\QBAF^i,p_1) \geq \GS(\QBAF^i,p_2)$, while in $\QBAF^p$ we expect that $\GS(\QBAF^p,p_1) \leq \GS(\QBAF^p,p_2)$.}
\cut{
\begin{example}
In Figure \ref{fig:framework_example}, if $\GS$ satisfies monotonicity, then from the Israeli citizen's perspective ($\QBAF^i$), agreement $p_1$ must be at least as strong as $p_2$, i.e. $\GS(\QBAF^i,p_1) \geq \GS(\QBAF^i,p_2)$, since $p_1$ has more supporters and fewer attackers than $p_2$. Conversely, from the Palestinian perspective ($\QBAF^p$), $p_1$ must 
be no stronger than $p_2$, i.e. $\GS(\QBAF^p,p_1) \leq \GS(\QBAF^p,p_2)$, since $p_1$ 
has more attackers and fewer supporters than $p_1$.
\end{example}
}
\AR{Since citizen's clause endorsement is a fundamental basis of agreement acceptability, gradual semantics' satisfaction of monotonicity seems crucial.}

The next implication concerns the agreement ranking. 


\begin{proposition}
\label{prop:topbottom}
    For $\deal_j\!,\! \deal_k \!\!\in\! \Deals$,  if $\deal_{\!j} \!\!=\! \BenClauses^i$, $\!\deal_k \!\!=\! \ConClauses^i$ and $\GS$ satisfies monotonicity, then $\deal_j \!\!\succeq_\GS^\QBAF\!\! \deal_l$ $\forall\! \deal_l \!\!\in\! \Deals$ and $\deal_k \!\!\preceq_\GS^\QBAF\!\! \deal_m$ $\forall \!\deal_m \!\!\in\! \Deals$.
\end{proposition}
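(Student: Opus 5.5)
The plan is to reduce the statement to Lemma~\ref{lem:clause} applied to the agreement arguments, by first computing the attackers and supporters of $p_j$ and $p_k$ directly from Definition~\ref{def:QBAF}.

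\textbf{Attackers and supporters of $p_j$ and $p_k$.} Since $\deal_j = \BenClauses^i$, every $c_m \in \ArgsCP$ has $C_m \in \deal_j$, so $(c_m,p_j)\in\Supps$. Every $c_m \in \ArgsCN$ has $C_m \nin \deal_j$ (the two clause sets partition $\Clauses$), so again $(c_m,p_j)\in\Supps$. Moreover, by item~2 of Definition~\ref{def:QBAF}, the only arguments related to agreement arguments are clause arguments. Hence
\[
\Supps(p_j)=\ArgsC \quad\text{and}\quad \Atts(p_j)=\emptyset .
\]
Symmetrically, $\deal_k=\ConClauses^i$ gives $\Atts(p_k)=\ArgsC$ and $\Supps(p_k)=\emptyset$. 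In general, every $p_l$ has $\Supps(p_l)\cup\Atts(p_l)=\ArgsC$ as a disjoint union, with $\Supps(p_l)\subseteq\Supps(p_j)$ and $\Supps(p_l)\supseteq\Supps(p_k)$.

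\textbf{Comparison with an arbitrary $\deal_l$.} If $\deal_l=\deal_j$, the claim is trivial by reflexivity of $\simeq$. Otherwise $\Supps(p_j)\supset\Supps(p_l)$, and Lemma~\ref{lem:clause} gives $\GS(\QBAF,p_j)\ge\GS(\QBAF,p_l)$. By Definition~\ref{def:ranking}, this means $\deal_j\succeq_\GS^\QBAF\deal_l$. The same argument with inclusions reversed yields $\GS(\QBAF,p_k)\le\GS(\QBAF,p_m)$, i.e.\ $\deal_k\preceq_\GS^\QBAF\deal_m$.

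\textbf{Main obstacle.} The step needing care is the reliance on Lemma~\ref{lem:clause} with strict inclusion, together with the fact that equal supporter sets are handled separately. One also has to confirm that distinct agreements yield distinct supporter sets. This holds because $p_l$'s supporters are determined exactly by which clauses $\deal_l$ contains. Alternatively, one can verify directly from monotonicity that it applies without routing through the lemma. All agreement arguments have $\BS=0.5$, and we have $\Atts(p_l)\supseteq\Atts(p_j)=\emptyset$ and $\Supps(p_l)\subseteq\Supps(p_j)$. Set inclusion implies the $\geq_\GS$ relations, via the identity injection on the nonzero-strength elements. Monotonicity then yields the inequality.
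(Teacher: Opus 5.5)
Your proposal is correct and follows essentially the same route as the paper. You derive $\Supps(p_j)=\Atts(p_k)=\ArgsC$ and $\Atts(p_j)=\Supps(p_k)=\emptyset$ from Definition~\ref{def:QBAF}, apply Lemma~\ref{lem:clause} through strict inclusion of supporter (resp.\ attacker) sets against every other agreement, and read off the ranking from Definition~\ref{def:ranking}; your explicit handling of the case $\deal_l=\deal_j$ and of the injectivity of $\deal_l \mapsto \Supps(p_l)$ simply makes precise what the paper leaves implicit.
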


An agreement with total endorsement of its clauses will rank highest \AR{amongst all clauses}, while one with zero endorsement will rank lowest. 
\ARR{In Figure \ref{fig:framework_example}, this means that in $\QBAF^i$ (in $\QBAF^p$) agreement $\deal_3$ is ranked 
highest (lowest, resp.) amongst the agreements given that it has a minimal (maximal, resp.) set of attackers and a maximal (minimal, resp.) set of supporters, which we believe is intuitive behaviour.}


\cut{
\begin{example}
In Figure \ref{fig:framework_example}, if $\GS$ satisfies monotonicity, then from the Israeli citizen's perspective ($\QBAF^i$), agreement $\deal_3$ is ranked 
highest amongst the agreements given that it has a minimal set of attackers and a maximal set of supporters, i.e. $\Atts^i(p_3) = \emptyset$ and $\Supps^i(p_3) = \ArgsC^i$, resp. Conversely, from the Palestinian citizen's perspective ($\QBAF^p$), agreement $\deal_3$ is ranked 
lowest given that it has a minimal set of supporters and a maximal set of attackers, i.e. $\Supps^p(p_3) = \emptyset$ and $\Atts^p(p_3) = \ArgsC^p$, resp.
\end{example}
}

In summary, we have established that balance, monotonicity (though not necessarily strict monotonicity) 
and duality are essential properties for \AR{gradual} semantics in our framework, in that they enforce 
\AR{intuitive} behaviour in 
\AR{the representation of \ARR{individual citizens'} opinions on peace agreements}.
\ARR{Both DF-QuAD and QEM satisfy these requirements, and so would be considered suitable gradual semantics at this point.}
\cut{We have thus shown theoretically that when a suitable gradual semantics, i.e. satisfying these properties, is deployed in our novel framework, it is able to intuitively represent the views of a single citizen.}





\section{Merging QBAFs to Find ZOPAs}
\label{sec:merging}

This section describes 
how we merge QBAFs representing the opinions of citizens in conflict in order to turn \AR{the} disagreement into a search for common ground. 
We first combine the reasoning of multiple citizens into a single merged QBAF\cut{ and illustrate it with a running example}, defining the ZOPA therein\cut{ (§\ref{ssec:merged_def})}. \cut{We then define the ZOPA, and flag the modelling caveats and possible extensions our minimalist construction leaves open (§\ref{ssec:merged_def}).} We then undertake theoretical analysis \cut{,
establishing formal guarantees which together }to support the choice of gradual semantics\cut{ for 
this application in real-world conflict resolution (\S\ref{ssec:merged_theory})}. 

\cut{
\subsection{Formal Definitions}
\label{ssec:merged_def}
}

We merge citizens' QBAFs as follows.

\begin{definition}\cut{[Merged QBAF]}
\label{def:merged}
    Given a set of $n$ QBAFs $\{ \QBAF^1, \ldots, \QBAF^n\}$, where $\QBAF^i = \langle \Args^i, \Atts^i, \Supps^i, \BS^i \rangle$ for $i \in \{1, \dots, n\}$, with a gradual semantics $\GS$ representing the reasoning of a corresponding set of citizens $U = \{ \user_1, \ldots, \user_n\} \subseteq \Users$ on $\Deals$, the \emph{merged QBAF representing $U$'s reasoning on $\Deals$} is a QBAF $\QBAF^* = \langle \Args^*, \Atts^*, \Supps^*, \BS^* \rangle$ such that:
    \begin{itemize}
        \item $\Args^* = \Args^1 \cup \ldots \cup \Args^n$;
        \item $\Atts^* = (\Atts^1 \cup \ldots \cup \Atts^n) \cap (\Args^* \times \Args^*)$;
        \item $\Supps^* = (\Supps^1 \cup \ldots \cup \Supps^n) \cap (\Args^* \times \Args^*)$;
        \item $\BS^*$ is such that for any $x \in \Args \cap \Args^i$, $\BS^*(x) = \BS^i(x)$.
    \end{itemize}
\end{definition}

\cut{
\begin{corollary}
\label{cor:merged}
    For any merged QBAF $\QBAF^* = \langle \Args^*, \Atts^*, \Supps^*, \BS^* \rangle$ with gradual semantics $\GS$ representing the reasoning of $U = \{ \user_1, \ldots, \user_n \}$ on $\Deals$:
    \begin{itemize}
        \item $\ArgsP^* = \ArgsP^1 = \ldots = \ArgsP^n$;
        \item $\ArgsC^* = \ArgsC^1 \cup \ldots \cup \ArgsC^n$ where $\ArgsC^i \cap \ArgsC^j = \emptyset$ $\forall i, j \in \{ 1, \ldots, n\}$ such that $i \neq j$;
        \item $\ArgsR^* = \ArgsR^1 \cup \ldots \cup \ArgsR^n$ where $\ArgsR^i \cap \ArgsR^j = \emptyset$ $\forall i, j \in \{ 1, \ldots, n\}$ such that $i \neq j$.
    \end{itemize}
\end{corollary}
}

\begin{figure}[t]
    \centering
    \includegraphics[width=0.9\linewidth]{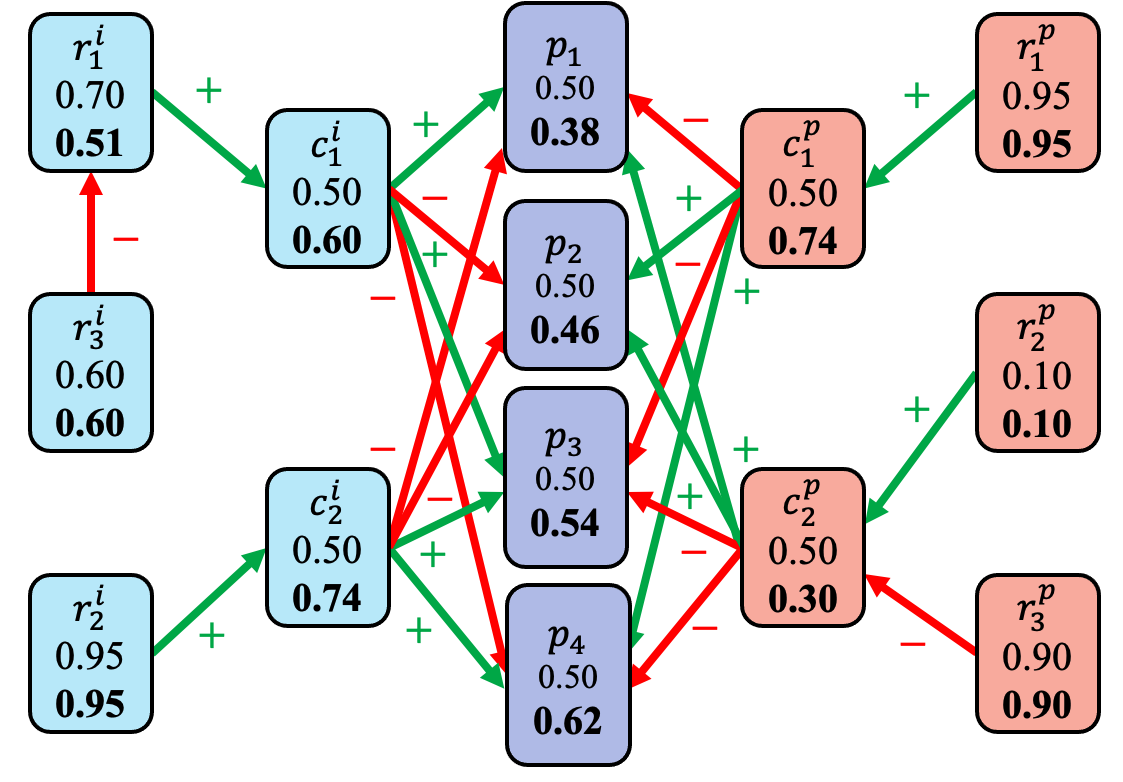}
    \caption{Merged QBAF 
    representing the reasoning of $\{ \user_i, \user_p \}$ on $\Deals$ from Figure \ref{fig:framework_example}, where the values in each argument represent its base score (in normal font) and its strength (in bold font, calculated with \AR{the QEM semantics}).}
    \label{fig:merging_example}
\end{figure}

Intuitively, merging combines several citizens' reasoning into a single graph:  the agreement layer, the set of candidate agreements, is shared across citizens. In contrast, the clause arguments and reasoning arguments are individual-specific and disjoint, and the merging preserves the union of all arguments, along with their corresponding relations and base scores.
For the remainder of this section, we assume as given a generic merged QBAF $\QBAF^* = \langle \Args^*, \Atts^*, \Supps^*, \BS^* \rangle$ with gradual semantics $\GS$ representing the reasoning of $U \subseteq \Users$ on $\Deals$.
With a slight abuse of notation, we allow argumentative rankings \cut{(Definition \ref{def:ranking}) }to be applied to merged QBAFs\cut{ (Definition \ref{def:merged})}.

\cut{Corollary \ref{cor:merged} indicates that this merging ensures that all citizens' reasoning is represented: citizen $u_i$'s endorsement of a clause is a different argument from citizen $u_j$'s non-endorsement of the same clause. Attack and support edges are inherited from each citizen's own QBAF, meaning they are faithfully preserved in the merged QBAF. 
We believe this is a reasonable starting point to represent all citizens' reasoning without further constraints or computation (e.g. identifying and merging individual arguments).}

    Figure \ref{fig:merging_example} illustrates a merged QBAF from the two QBAFs shown in Figure \ref{fig:framework_example}.
    Despite the impression of total disagreement with no common ground between $\user_i$ and $\user_p$ when the QBAFs were viewed individually, the merged QBAF reveals agreements which are mutually acceptable to both citizens based on their own reasoning, i.e. if we take the argumentative ranking for this merged QBAF, $\deal_1 \!\!\prec_\GS^{\QBAF^*}\!\! \deal_2 \!\!\prec_\GS^{\QBAF^*}\!\! \deal_3 \!\!\prec_\GS^{\QBAF^*}\!\! \deal_4$, we see that $\deal_4$ is the most mutually acceptable agreement
    , while $\deal_1$ is the least mutually acceptable.

Next, we introduce a ZOPA, i.e. a classification of which arguments might be considered acceptable by all citizens.

\begin{definition}\cut{[ZOPA]}
\label{def:zopa}
    The \emph{ZOPA} between $U$ in $\QBAF^*$ with $\GS$ is $\ZOPA(\QBAF^*,\GS) = \{ \deal_k \in \Deals \mid \GS(\QBAF^*, p_k) > 0.5 \}$.
\end{definition}

The ZOPA is the set of candidate agreements whose strength in the merged QBAF exceeds the agreements' fixed 
base score of $0.5$\AR{, i.e. the neutral midpoint}. Clearing this threshold means the reasoning from citizens on both sides is on balance supportive of the agreement.

The ZOPA for the example in Figure \ref{fig:merging_example} is $\ZOPA(\QBAF^*\!\!,\GS) = \{ \deal_3, \deal_4 \}$. Agreement $\deal_4$ is the more 
acceptable of the two because, while its clauses received mixed endorsement from the citizens, its supporting reasoning was stronger: 
the attackers of this agreement are the two clauses on which the citizens compromised somewhat in their reasoning ($c_1^i$ for $\user_i$, compromising with the attacker $r_3^i$, and $c_2^p$ for $\user_p$, compromising with the attacker $r_3^p$).
\cut{Meanwhile, $\deal_3$ is in the ZOPA because, while both $\deal_2$ and $\deal_3$ contain clauses which are only endorsed by one side, the reasoning put forward by $\user_i$ (supporting $\deal_3$) is stronger than that put forward by $\user_p$ (supporting $\deal_2$).}
This example demonstrates how reasoning, and the argumentative strength thereof, drives our identification of a ZOPA. The inclusion of $\deal_4$ in the ZOPA demonstrates the importance of compromises in narratives\cut{, while that of $\deal_3$ shows the effect of how strongly participants feel about their reasoning}. Nevertheless, this effect raises the obvious question of the system's susceptibility to strategic manipulation, e.g. if compromises are purposely hidden or strengths are exaggerated, but at this stage, we assume the access to truthful opinions. 

One limitation of our approach is that Definition~\ref{def:merged} takes the disjoint union of clause and reasoning arguments across citizens, meaning \cut{the merged QBAF }\ARR{they }could potentially include duplicate\cut{ argument}s. \cut{The effect of these duplications}\ARR{This} could be addressed 
by merging similar arguments as in \cite{Gorur_26}, which could allow for the extraction new relations between them with argument mining \cite{Cabessa_25,Gorur_25} or the adjustment of base scores based on aggregating citizens' endorsement \cite{Rago_17}. 

Also, our threshold-based ZOPA is one of several ZOPA notions our framework supports natively: it is the most parsimonious choice consistent with balance. The same merged QBAF accommodates threshold-based, top-k and Pareto improvement on the status-quo as direct variants. We leave an investigation of their formal guarantees to future work. 

\cut{
\subsection{Theoretical Analysis}
\label{ssec:merged_theory}
}

We now theoretically analyse our merged QBAF, determining the properties needed in the selected gradual semantics to guarantee intuitive behaviour. Firstly,
our decision to merge the arguments in a simple manner gives the following.

\begin{proposition}
\label{prop:preservation}
    If $\GS$ satisfies balance, then
    $\GS(\QBAF^*, c_i^j) = \GS(\QBAF^j, c_i^j)$ $\forall c_i^j \in \ArgsC^*$.
\end{proposition}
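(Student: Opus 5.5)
The plan is an induction over the acyclic structure of the arguments that can reach $c_i^j$. The key observation is that merging never adds incoming edges to a clause or reasoning argument of citizen $\user_j$.

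\textbf{Step 1: the incoming structure is unchanged.} By Definition \ref{def:QBAF}, every attack or support into a clause argument or a reasoning argument comes from $\ArgsR$. Every edge leaving a clause argument goes to $\ArgsP$. In the merged QBAF (Definition \ref{def:merged}), the clause and reasoning arguments of different citizens are disjoint, and only the agreement arguments are shared. Agreement arguments are never the source of an edge. Hence, for any $x \in \ArgsR^j \cup \ArgsC^j$, we have $\Atts^*(x) = \Atts^j(x)$, $\Supps^*(x) = \Supps^j(x)$ and $\BS^*(x) = \BS^j(x)$.

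By induction along incoming edges, the same holds for every ancestor of $c_i^j$. So the sub-QBAF of $\QBAF^*$ consisting of $c_i^j$ and all its ancestors coincides exactly with the corresponding sub-QBAF of $\QBAF^j$. It lies entirely within $\ArgsR^j \cup \{c_i^j\}$.

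\textbf{Step 2: induction on depth.} Since we restrict to acyclic QBAFs, I would induct on the length of the longest incoming path to $x$, for $x$ ranging over the ancestors of $c_i^j$ together with $c_i^j$ itself.

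\emph{Base case.} Here $x$ has no attackers and no supporters in either QBAF. Then $\Atts(x) =_\GS \Supps(x)$ holds trivially, since both are empty. The first clause of balance then gives $\GS(\QBAF^*,x) = \BS^*(x) = \BS^j(x) = \GS(\QBAF^j,x)$. This is where the balance hypothesis is used explicitly.

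\emph{Inductive step.} By the hypothesis, all attackers and supporters of $x$ have identical strengths in $\QBAF^*$ and $\QBAF^j$. The attacker and supporter sets are identical by Step 1, and so is the base score. Therefore the multisets $z(\Atts(x))$ and $z(\Supps(x))$ coincide across the two QBAFs, and I conclude $\GS(\QBAF^*,x) = \GS(\QBAF^j,x)$.

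\textbf{Main obstacle.} The delicate point is the inductive step. Balance and the equality clause of monotonicity only compare arguments within a single QBAF, so they do not by themselves transfer strengths between two different QBAFs. To close this gap I would rely on the standard modular form of gradual semantics, as in DF-QuAD and QEM. In that form, an argument's strength is a fixed function of its base score and the strengths of its attackers and supporters, computed bottom-up in the acyclic case. I would state this locality explicitly as the reading of $\GS$ under which the proposition holds. Under it, the inductive step is immediate, and balance secures the base case as described above.
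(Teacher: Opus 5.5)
Your proof has the same skeleton as the paper's: everything upstream of $c_i^j$ belongs to $\user_j$ and keeps its base score, attackers and supporters in $\QBAF^*$, and equality of strengths is then propagated bottom-up. The two differ in the propagation step, and there your version is the sound one.

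The paper argues that, since these local data coincide, ``by balance'' $\GS(\QBAF^*,r_k^j)=\GS(\QBAF^j,r_k^j)$, and then ``by the same logic'' for $c_i^j$. As you note, balance only compares an argument's strength with its own base score inside a single QBAF. It forces equality only when $\Atts(x) =_\GS \Supps(x)$, which is your base case, and otherwise gives one-sided bounds.

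In fact the proposition fails under balance alone. Consider the semantics that, computed bottom-up, sets
\begin{itemize}
\item $\GS(\QBAF,x)=\BS(x)/(1+|\Args|)$ if $\Atts(x) >_\GS \Supps(x)$,
\item $\GS(\QBAF,x)=\BS(x)+(1-\BS(x))\,|\Args|/(1+|\Args|)$ if $\Atts(x) <_\GS \Supps(x)$,
\item $\GS(\QBAF,x)=\BS(x)$ otherwise.
\end{itemize}
It satisfies balance. Now take a clause argument with no supporters whose only attacker is an unattacked, unsupported reasoning argument with positive base score. It gets strength $0.5/(1+|\Args^j|)$ in $\QBAF^j$ but $0.5/(1+|\Args^*|)$ in $\QBAF^*$. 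So the locality hypothesis you make explicit, which DF-QuAD and QEM satisfy, is exactly what the paper's proof uses tacitly. It is a necessary repair rather than a weakness of your argument.

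Two refinements. First, under your modular reading balance is actually redundant: the base case also follows, since the aggregation function is applied to the same base score and two empty multisets in both QBAFs. Second, a weaker hypothesis would suffice: that an argument's strength depends only on the sub-QBAF formed by it and its ancestors. Under that hypothesis your Step 1 finishes the proof on its own, without the induction.
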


Intuitively, the strengths of clause and reasoning arguments will be preserved in the merged QBAF\AR{, giving provenance to the merged QBAF in that reasoning can be traced back to the citizen from whom it came}. Figure \ref{fig:merging_example} shows why this is the case, with arguments ``upstream'' of the clause and reasoning arguments remaining separate from the others due to the direction of the reasoning. \cut{Note that if a more complicated merging
were undertaken, e.g. merging similar arguments or mining new relations across clauses' sub-graphs, this property may be violated.} 


To capture the dynamics of bilateral disagreements, for the remainder \cut{of the paper }we restrict merged QBAFs to 
\ARR{two citizens, i.e.  $U \!\!=\!\! \{ \user_i, \user_j \}$ (which may represent two homogenous parties)}. \cut{We now give our main theoretical results.}


\begin{theorem}\cut{[Balance of ZOPAs]}
\label{thm:zopa}
    For any $p_k \!\in\! \ArgsP^*$ in $\QBAF^*$ with $\GS$, where
    $\GS$ satisfies balance and strict monotonicity:
        if $\Atts^*\!(p_k) \!<_\GS\! \Supps^*\!(p_k)$, then $\AR{\deal_k} \!\in\! \ZOPA(\QBAF^*\!,\GS)$; and
        if $\Atts^*\!(p_k) \!\geq_\GS\! \Supps^*\!(p_k)$, then $\AR{\deal_k} \!\nin\! \ZOPA(\QBAF^*\!,\GS)$. 
\end{theorem}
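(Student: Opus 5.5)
The plan is to show $\GS(\QBAF^*,p_k) > 0.5$ in the first case and $\GS(\QBAF^*,p_k) \leq 0.5$ in the second, using $\BS^*(p_k)=0.5$ (Definition~\ref{def:QBAF}, preserved by Definition~\ref{def:merged}) and the threshold in Definition~\ref{def:zopa}.

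\textbf{Second case.} I would handle this first because it follows from balance alone. If $\Atts^*(p_k) \geq_\GS \Supps^*(p_k)$, then either $\Atts^*(p_k) >_\GS \Supps^*(p_k)$, or both $\Atts^*(p_k) \geq_\GS \Supps^*(p_k)$ and $\Supps^*(p_k) \geq_\GS \Atts^*(p_k)$ hold. In the latter situation we have injective, value-dominating maps in both directions between the finite multisets $z(\Atts^*(p_k))$ and $z(\Supps^*(p_k))$. A routine sorting argument then shows the two multisets coincide, so $\Atts^*(p_k) =_\GS \Supps^*(p_k)$. Balance therefore gives $\GS(\QBAF^*,p_k) = 0.5$ in the equality subcase and $\GS(\QBAF^*,p_k) \leq 0.5$ in the strict subcase. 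In both subcases $\deal_k \notin \ZOPA(\QBAF^*,\GS)$.

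\textbf{First case.} If $\Atts^*(p_k) <_\GS \Supps^*(p_k)$, balance immediately yields $\GS(\QBAF^*,p_k) \geq 0.5$; strict monotonicity is needed to exclude equality. The approach uses the \emph{mirror agreement} $\deal_l = \Clauses \setminus \deal_k$, which lies in $\Deals$ because $\Deals$ is the power set of $\Clauses$.
\begin{itemize}
  \item By Definition~\ref{def:QBAF}, every clause argument of each citizen that supports $p_k$ attacks $p_l$, and vice versa.
  \item Reasoning arguments never attack or support agreement arguments.
  \item Definition~\ref{def:merged} only takes unions of these relations.
\end{itemize}
Hence $\Atts^*(p_l) = \Supps^*(p_k)$ and $\Supps^*(p_l) = \Atts^*(p_k)$ in $\QBAF^*$. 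Since $\BS^*(p_l) = \BS^*(p_k)$, $\Atts^*(p_l) >_\GS \Atts^*(p_k)$ and $\Supps^*(p_l) <_\GS \Supps^*(p_k)$, strict monotonicity gives $\GS(\QBAF^*,p_l) < \GS(\QBAF^*,p_k)$. Balance applied to $p_l$, whose attackers dominate its supporters, gives $\GS(\QBAF^*,p_l) \leq 0.5$.

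\textbf{Main obstacle.} The two facts above give only $\GS(\QBAF^*,p_l) < \GS(\QBAF^*,p_k)$ with both values on either side of $0.5$; they do not force $\GS(\QBAF^*,p_k) > 0.5$. My first attempt to close this gap is the mirror structure, invoking duality, which §\ref{sec:main} established as a required property. By Corollary~\ref{cor:P_mirror} applied in $\QBAF^*$, $\GS(\QBAF^*,p_l) = 1 - \GS(\QBAF^*,p_k)$. Combined with $\GS(\QBAF^*,p_l) < \GS(\QBAF^*,p_k)$, this yields $\GS(\QBAF^*,p_k) > 0.5$, so $\deal_k \in \ZOPA(\QBAF^*,\GS)$.

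If duality is to be avoided, the fallback is to apply strict monotonicity against a neutral comparator: a hypothetical agreement argument with base score $0.5$ and $=_\GS$-balanced attackers and supporters. Balance gives that comparator strength exactly $0.5$, and strict monotonicity then places $p_k$ strictly above it. The delicate point in this fallback is justifying that such a comparator may be added without altering $\GS(\QBAF^*,p_k)$. That holds for the semantics considered here, because agreement arguments are sinks and so adding one cannot change the strength of any other argument.
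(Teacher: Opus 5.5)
\textbf{The gap is in the closing step of your first case.} Your mirror step is correct: $\Deals$ is the full power set, so $\deal_l=\Clauses\setminus\deal_k$ exists, with $\Atts^*(p_l)=\Supps^*(p_k)$ and $\Supps^*(p_l)=\Atts^*(p_k)$. Your second case is also right. It is more careful than the paper's, which says balance ``straightforwardly'' gives $\GS(\QBAF^*,p_k)\le 0.5$ without splitting $\geq_\GS$ into $>_\GS$ or $=_\GS$.

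The problem is that duality is not among the theorem's hypotheses. Your main route therefore proves the statement only under an extra assumption. Your fallback needs a locality property that balance and strict monotonicity do not give: adding a fresh sink, plus fresh supporters realising the required strengths, must leave existing strengths unchanged. Both properties only compare arguments inside one fixed QBAF, so they cannot supply this. The paper's own proof is exactly your fallback, without your caveat. It ``compares with some $p_l\in\ArgsP^*$ such that $\Atts^*(p_l)=_\GS\Supps^*(p_l)=_\GS\Atts^*(p_k)$'' but never shows such an agreement argument exists, and in general none does.

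\textbf{No argument can close this gap, because the first claim fails under the stated hypotheses.} Here is a counterexample.
\begin{itemize}
\item Take one clause, with $\deal_1=\emptyset$ and $\deal_2=\{C_1\}$.
\item $\user_i$ endorses $C_1$ and has a single reasoning argument $r^i$ with $\BS(r^i)=0.9$ supporting $c_1^i$.
\item $\user_j$ does not endorse $C_1$ and has no reasoning.
\item In $\QBAF^*$, $c_1^i$ supports $p_2$ and attacks $p_1$, and $c_1^j$ does the reverse.
\end{itemize}
Take any semantics satisfying balance and strict monotonicity (e.g.\ QEM) and override it on this single QBAF by $r^i\mapsto 0.9$, $c_1^j\mapsto 0.5$, $c_1^i\mapsto 0.7$, $p_1\mapsto 0.3$, $p_2\mapsto 0.5$.

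Balance holds at every argument. The only ordered pairs of distinct arguments meeting the premises of (strict) monotonicity are $(c_1^j,r^i)$, $(c_1^j,c_1^i)$, $(p_1,c_1^i)$, $(p_2,c_1^i)$ and $(p_1,p_2)$, and each is strictly ordered as required. Since both properties are per-QBAF, the overridden semantics still satisfies them everywhere. Yet $\Atts^*(p_2)<_\GS\Supps^*(p_2)$ (strengths $0.5$ versus $0.7$) while $\GS(\QBAF^*,p_2)=0.5$, so $\deal_2$ is not in $\ZOPA(\QBAF^*,\GS)$. Duality fails precisely at the pair $(p_1,p_2)$, since $0.3\neq 1-0.5$. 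That is exactly why your duality step works.

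\textbf{Consequence.} The theorem needs an extra hypothesis: duality, a strict form of balance, or a locality property. With duality added (QEM satisfies it, and Section~\ref{sec:main} already demands it), your mirror argument is a complete proof. Unlike the paper's, it stays entirely inside $\QBAF^*$.
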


An agreement is part of the ZOPA if its attackers in the merged graph are weaker than its supporters. 
\cut{The ZOPA thus consists of those agreements for which the reasoning-weighted balance of objections is outweighed by the reasoning-weighted balance of endorsements, across all citizens at once. Conversely, when objections are not outweighed, the agreement is excluded.} 
Consequently, the ZOPA identifies the common ground in the form of agreements of which the collective reasoning is in support, rather than in opposition.
\ARR{For example, in Figure \ref{fig:merging_example}, the stronger supports from $c_2^i$ and $c_1^p$, compared to the weaker attacks from $c_1^i$ and $c_2^p$, mean $\deal_4$ is in the ZOPA (and ranked highest).}


\cut{
\begin{example}
    In Figure \ref{fig:merging_example}, the stronger supports from $c_2^i$ and $c_1^p$, compared to the weaker attacks from $c_1^i$ and $c_2^p$, means that $\deal_4$ is in the ZOPA (and ranked highest). Meanwhile, the stronger attacks from $c_2^i$ and $c_1^p$, compared to the weaker supports from $c_1^i$ and $c_2^p$, means that $\deal_1$ is not in the ZOPA (and ranked lowest). 
\end{example}
}

We will now assess the argumentative ranking induced by $\QBAF^*$ and $\GS$ in Definition \ref{def:ranking} by considering two extreme cases. \cut{Recall that $\BenClauses^i$ denotes the clauses endorsed by citizen $\user_i$ and $\ConClauses^i$ those they reject.}

\begin{theorem}\cut{[ZOPAs under Total Disagreement]}
\label{thm:merged_disagreement}
    If $\BenClauses^i = \ConClauses^j$, $\ConClauses^i = \BenClauses^j$, $\GS(\QBAF^i,c_k^i) = \GS(\QBAF^j,c_k^j)$ $\forall k \in \{ 1, \ldots, |\Clauses| \}$ and $\GS$ satisfies balance, then $\deal_l \simeq_\GS^{\QBAF^*} \deal_m$ $\forall \deal_l, \deal_m \in \Deals$ and $\ZOPA(\QBAF^*,\GS) = \emptyset$.
\end{theorem}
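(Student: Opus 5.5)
The plan is to show that, for every agreement argument $p_l \in \ArgsP^*$, its attackers and supporters in $\QBAF^*$ are balanced in the sense $\Atts^*(p_l) =_\GS \Supps^*(p_l)$. The first clause of balance then gives $\GS(\QBAF^*,p_l) = \BS^*(p_l) = 0.5$ for all $l$. From this, Definition \ref{def:ranking} immediately yields $\deal_l \simeq_\GS^{\QBAF^*} \deal_m$ for all $\deal_l,\deal_m \in \Deals$. Definition \ref{def:zopa} also gives $\ZOPA(\QBAF^*,\GS)=\emptyset$, since no strength strictly exceeds $0.5$.

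\textbf{Step 1: the attackers and supporters of $p_l$.} By Definition \ref{def:merged}, the attackers and supporters of $p_l$ in $\QBAF^*$ are exactly the clause arguments of both citizens; reasoning arguments never target agreement arguments by Definition \ref{def:QBAF}. So $\Atts^*(p_l) \cup \Supps^*(p_l) = \ArgsC^i \cup \ArgsC^j$. Fix a clause $C_k$ and suppose $C_k \in \BenClauses^i$, so by hypothesis $C_k \in \ConClauses^j$. The case split on whether $C_k \in \deal_l$ goes as follows.
\begin{itemize}
\item If $C_k \in \deal_l$, then $c_k^i$ supports $p_l$ and $c_k^j$ attacks $p_l$.
\item If $C_k \notin \deal_l$, then $c_k^i$ attacks $p_l$ and $c_k^j$ supports $p_l$.
\end{itemize}
The case $C_k \in \ConClauses^i$ is symmetric. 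Hence for each $k$, exactly one of $c_k^i, c_k^j$ lies in $\Atts^*(p_l)$ and the other in $\Supps^*(p_l)$. This gives a bijection $f$ between $\Atts^*(p_l)$ and $\Supps^*(p_l)$ pairing $c_k^i$ with $c_k^j$.

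\textbf{Step 2: paired arguments have equal strengths in $\QBAF^*$.} Proposition \ref{prop:preservation} (using balance) gives $\GS(\QBAF^*,c_k^i)=\GS(\QBAF^i,c_k^i)$ and $\GS(\QBAF^*,c_k^j)=\GS(\QBAF^j,c_k^j)$. By the hypothesis these two strengths are equal. So $f$ is strength-preserving, and the multisets of strengths of $\Atts^*(p_l)$ and $\Supps^*(p_l)$ coincide. Removing zero-strength elements preserves this equality, since zero-strength arguments are removed pairwise. Therefore $z(\Atts^*(p_l)) = z(\Supps^*(p_l))$, i.e. $\Atts^*(p_l) =_\GS \Supps^*(p_l)$.

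\textbf{Step 3: apply balance.} Since $\BS^*(p_l)=0.5$ by Definition \ref{def:QBAF}, the equality case of balance gives $\GS(\QBAF^*,p_l)=0.5$, and the ranking and ZOPA conclusions follow as described at the start. The only delicate point is Step 2. It relies on Proposition \ref{prop:preservation} to transfer the hypothesised equality of individual strengths to the merged QBAF. It also requires that $=_\GS$ be read as equality of multisets after zero-removal, which the pairing handles directly.
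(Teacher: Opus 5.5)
Your proof is correct and takes essentially the same route as the paper's. Both arguments pair $c_k^i$ with $c_k^j$ so that one attacks and the other supports each $p_l$, conclude $\Atts^*(p_l) =_\GS \Supps^*(p_l)$, and apply the equality clause of balance to get strength $0.5$ for every agreement. You are slightly more explicit than the paper, which uses the equal-strength hypothesis directly in $\QBAF^*$; you instead transfer it via Proposition \ref{prop:preservation} and check that removing zero-strength arguments respects the pairing.
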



When two citizens disagree on every clause with reasoning of identical strength, the merged graph is symmetric: every agreement faces exactly as much support as it does attack\cut{. Under balance, no agreement can pull ahead of any other, so the ranking collapses: all arguments are ranked equally} 
and the ZOPA is empty. This shows the framework behaving appropriately in the worst-case: perfectly opposed views with no strength asymmetry results in no common ground.

\cut{
\begin{example}
    The hypothetical citizens in Figure~\ref{fig:merging_example} totally disagree on the clauses. If their
  clause arguments were also of equal strength, all agreements would be ranked equally and the ZOPA would be empty. 
\end{example}
}


\cut{Finally, we consider the other extreme. Recall that $z(\ArgsC)$ is the set of clauses which have non-zero strengths.}

\begin{theorem}\cut{[ZOPAs under Total Agreement]}
\label{thm:merged_agreement}
    If $\deal_k = \BenClauses^i = \BenClauses^j$, $\deal_l = \ConClauses^i = \ConClauses^j$, $|z(\ArgsC)| = |\ArgsC|$ and $\GS$ satisfies balance and strict monotonicity, then $\deal_k \succ_\GS^{\QBAF^*}\!\!~\deal_m$ $\forall \deal_m \in \Deals \setminus \{ \deal_k \}$ and $\deal_l \prec_\GS^{\QBAF^*}\!\!~\deal_n$ $\forall \deal_n \in \Deals \setminus \{ \deal_l \}$. Further, $\deal_k \in \ZOPA(\QBAF^*,\GS)$ and $\deal_l \nin \ZOPA(\QBAF^*,\GS)$.
\end{theorem}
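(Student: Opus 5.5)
Since both citizens endorse exactly the same clauses, the first step is to write down the supporters and attackers of each agreement argument in $\QBAF^*$. By Definition~\ref{def:QBAF}, every clause argument $c_o^i$ with $C_o \in \BenClauses^i$ supports exactly the $p_m$ with $C_o \in \deal_m$ and attacks the others. Every $c_o^i$ with $C_o \in \ConClauses^i$ does the reverse, and the same holds for citizen $\user_j$. By Definition~\ref{def:merged}, the only incoming edges to agreement arguments in $\QBAF^*$ come from $\ArgsC^* = \ArgsC^i \cup \ArgsC^j$, so
$\Supps^*(p_m) = \{ c_o^x \mid x \in \{i,j\}, C_o \in (\deal_m \cap \BenClauses^x) \cup (\ConClauses^x \setminus \deal_m)\}$
and $\Atts^*(p_m) = \ArgsC^* \setminus \Supps^*(p_m)$. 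For $\deal_k = \BenClauses^i = \BenClauses^j$ this gives $\Supps^*(p_k) = \ArgsC^*$ and $\Atts^*(p_k) = \emptyset$. Symmetrically, for $\deal_l = \ConClauses^i = \ConClauses^j$ it gives $\Supps^*(p_l) = \emptyset$ and $\Atts^*(p_l) = \ArgsC^*$. For any other $\deal_m \neq \deal_k$, some clause $C_o$ lies in exactly one of $\deal_m$ and $\deal_k$. Hence both $c_o^i$ and $c_o^j$ move from $\Supps^*$ to $\Atts^*$, so $\Supps^*(p_m) \subsetneq \ArgsC^* = \Supps^*(p_k)$ and $\Atts^*(p_m) \supsetneq \emptyset$. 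The analogous statement holds for $p_l$.

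Next I turn these set inclusions into strict comparisons under $\geq_\GS$. By the hypothesis $|z(\ArgsC)| = |\ArgsC|$, read as applying to $\ArgsC^*$ via Proposition~\ref{prop:preservation}, every clause argument has nonzero strength in $\QBAF^*$. Using the identity map as the injection, $\Supps^*(p_k) \geq_\GS \Supps^*(p_m)$ and $\Atts^*(p_k) \leq_\GS \Atts^*(p_m)$. Strictness follows because $z(\Atts^*(p_m))$ is nonempty while $z(\emptyset)$ is empty. No injection from a nonempty multiset into the empty one exists, so $\emptyset \not\geq_\GS \Atts^*(p_m)$, giving $\Atts^*(p_k) <_\GS \Atts^*(p_m)$. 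The base scores are equal at $0.5$. Strict monotonicity, applied to the pair $(p_m,p_k)$, then gives $\GS(\QBAF^*,p_m) < \GS(\QBAF^*,p_k)$, that is, $\deal_k \succ_\GS^{\QBAF^*} \deal_m$ by Definition~\ref{def:ranking}. The same argument with the roles reversed gives $\deal_l \prec_\GS^{\QBAF^*} \deal_n$ for all $\deal_n \neq \deal_l$.

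For the ZOPA claims I invoke Theorem~\ref{thm:zopa}. Since $z(\Atts^*(p_k)) = \emptyset$ and $z(\Supps^*(p_k))$ is nonempty, we have $\Atts^*(p_k) <_\GS \Supps^*(p_k)$, so $\deal_k \in \ZOPA(\QBAF^*,\GS)$. Likewise $\Atts^*(p_l) >_\GS \Supps^*(p_l)$, in particular $\geq_\GS$, so $\deal_l \nin \ZOPA(\QBAF^*,\GS)$.

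The main obstacle is the role of the nonzero-strength hypothesis. If some clause arguments had zero strength, the comparison $\geq_\GS$ would ignore them, and the strict inequalities that strict monotonicity needs could collapse into equalities. It therefore matters to check that every $\deal_m \neq \deal_k$ differs in at least one clause whose argument is nonzero. The hypothesis guarantees this, but I would take care to justify that the hypothesis on $z(\ArgsC)$ transfers to $\QBAF^*$ via Proposition~\ref{prop:preservation}.
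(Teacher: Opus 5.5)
Your proof is correct and follows the same route as the paper. You compute $\Supps^*$ and $\Atts^*$ for $p_k$, $p_l$ and the remaining agreement arguments from Definitions~\ref{def:QBAF} and~\ref{def:merged}, use the nonzero-strength hypothesis to make the comparisons strict, apply strict monotonicity with the common base score $0.5$, and close with Theorem~\ref{thm:zopa}; your two extra steps (no injection of a nonempty $z(\cdot)$ into $z(\emptyset)$, and transferring the hypothesis to $\QBAF^*$ via Proposition~\ref{prop:preservation}) just spell out what the paper leaves as ``similar logic to Lemma~\ref{lem:clause}'' and reads directly as $|z(\ArgsC^*)| = |\ArgsC^*|$.
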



Meanwhile, when both sides of the conflict endorse the same clauses, intuitively, the agreement that includes precisely those clauses is ranked strictly highest and lies in the ZOPA, while the agreement which includes precisely none of those clauses is ranked strictly lowest and lies outside it.

\cut{
\begin{example}
    This effect would be seen in Figure \ref{fig:merging_example} if both citizens had endorsed $C_1, C_2 \in \Clauses$, i.e. meaning $\BenClauses^i = \BenClauses^j = \{ C_1, C_2 \}$, which would then mean that $\deal_4$ would be top-ranked, while $\deal_1$ would be bottom-ranked.
\end{example}
}

In this section, we have demonstrated how 
\AR{both parties' QBAFs} can be merged to reveal ZOPAs between citizens, proving intuitive behaviour can be guaranteed\cut{ concerning the preservation of individual reasoning (Proposition~\ref{prop:preservation}), the relative effect of attackers and supporters on ZOPA membership (Theorem~\ref{thm:zopa}) and ZOPAs under total disagreement (Theorem~\ref{thm:merged_disagreement}) and agreement (Theorem~\ref{thm:merged_agreement})}. These results, in addition to those from §\ref{sec:main}, show that gradual semantics which satisfy the properties of balance, (strict) monotonicity and duality are 
suitable for our application in real-world conflict resolution.
Thus, the QEM semantics is suitable, while DF-QuAD is not given its violation of strict monotonicity.

\section{Empirical Evaluation}\label{sec:evaluation}

We now perform preliminary experiments to assess the suitability of our method for real-world deployment. We do so with survey data from \cite{Cavatorta_25} (§\ref{ssec:survey}) and retrieved data from LLMs (§\ref{ssec:LLMs}). In both experimental settings, we use the same set-up as \citet{Cavatorta_25}, with Israeli respondents on one side and Palestinian respondents on the other, and eight binary clauses forming each agreement. 
The clauses (and the corresponding status quo variant) were: 
1) settlement freeze (or continuation); 
2) recognition of Israel as the nation state of the Jewish people (or lack thereof) ; 
3) establishment of an independent Palestinian state with equitable land swaps (or current jurisdiction); 
4) increased freedom of movement for all people (or current restrictions); 
5) unrestricted rights to access to Holy sites (or current restrictions); 
6) Jerusalem as joint capital (or separate and divided capital cities); 
7) mutual amnesty for prisoners (or current practices of detention); and 
8) proportionality on water rights (or current distribution). \ARR{We use the QEM semantics given the findings from §\ref{sec:main}-§\ref{sec:merging}.}

\subsection{Survey Data from Existing Work}
\label{ssec:survey}

In our first experiment, we assess whether our merged QBAF is able to recover respondent preferences that were measured independently of it.
To do so, we use the data of \citet{Cavatorta_25}, who fielded a 64-agreement, rank-ordering task over the same eight binary clauses with balanced samples of Israelis ($n=1152$) and Palestinians ($n=1152$). We ask whether our merged QBAF\ARR{, with the clause arguments populated using the analysis from \cite{Cavatorta_25},} is able to produce an argumentative ranking which corresponds to a ``ground truth'' empirical ranking that we infer from the raw ranking data by measuring, for each agreement, the share of respondents (from both populations) who rank the agreement above the status quo agreement. 
\ARR{To populate the clause arguments for the Israeli and Palestinian sides, we assign them strengths, assuming the reasoning upstream is implicit since \citet{Cavatorta_25} do not record reasoning. 
However, their empirical design identifies, for each
clause, the proportion $\pi_j^a$, for each party $a \in \{i,p\}$, who prefer an agreement containing clause $j \in \{1,\ldots,8\}$ to the otherwise identical agreement without it. We use those 16 causal
estimates to populate the clauses directly. Party $a$ endorses the change
variant of clause $j$ when \citet{Cavatorta_25}'s coefficient $\beta_j^a>0$. A clause argument $c_j^a$ is given strength
$\sigma(c_j^a,\QBAF^a)=|\pi_j^a-(1-\pi_j^a)|$ with $\pi_j^a=\Lambda(\beta_j^a)$ and
$\Lambda$ the logistic function, so the strength is the margin by which the
endorsed variant wins: if everyone in party $a$ prefers an agreement with clause $j$'s compared to the agreement without clause $j$, the strength is 1; if everyone is indifferent, the strength is 0. 
\cut{For example, take clause B: `recognition of
Israel as the nation state of the Jewish people'. For Israelis
$\beta=+0.373$, so $\pi=0.59$: 59 in 100 prefer an agreement with the clause to the
same agreement without it, the clause is endorsed. For Palestinians $\beta=-0.062$, so $\pi=0.48$, the clause
is opposed.}}

Over the 64 agreements, the argumentative ranking and the empirical ranking correlate at Spearman $\rho=0.448$ ($p=0.000$) and Kendall $\tau=0.293$ ($p=0.000$). Turning the ranking into a set, a pooled majority puts 56 of the 64 deals above the status quo and the merged QBAF puts 63 above the strength of the status quo. This yields a precision of $0.873$ and recall $0.982$. However, because the target is 56/64 it is an easy target. \ARR{While this assessment is by no means perfect, we believe it shows encouraging correlation.}

\subsection{Retrieved Data from LLMs}
\label{ssec:LLMs}

Our second experiment examines whether our method gives intuitive results that would be useful for a peace negotiator.
To assess whether it is feasible we built an LLM-driven pipeline that retrieves reasoning arguments from published survey reports, material a mediator typically possesses, demonstrating how our method could provide information without costly fieldwork. 
The output is a ranking over agreements,
together with the clause-level strengths behind it, which is potentially crucial information for a negotiator.

To elicit public reasoning that reflects contemporary Palestinian and Israeli
views, we prompted an LLM (Opus 4.7 from Anthropic\footnote{\url{https://www.anthropic.com/news/claude-opus-4-7}}\ifthenelse{\boolean{arxivversion}}{}{, see the supplementary material for prompts}) to retrieve reasoning arguments about the eight clauses from the reports of nationally-representative opinion polls conducted by a curated list of well-reputed institutes. 
Every candidate argument was manually validated against the source
document before it was admitted to the QBAFs.\footnote{Technically, we ask for a literal substring
check that the quoted statement appears verbatim in the document, an identity
check on the source name and URL, and a context check that the statement carries
a percentage figure and a token identifying the Israeli or Palestinian
population. Prevalence is recorded as reported and never inverted: 20\% support
for annexation is recorded as 0.2, never as 0.8 opposition.} Each retained
argument is then encoded along two attributes. Its base score is the prevalence
quoted for the relevant population, so a reported 69\% becomes $0.69$. Its
stance is the polarity of its content towards the side's endorsed clause: a reason favouring the clause enters as a
supporter, a reason against it as an attacker. For example, the statement ``69\% of
Palestinians indicated satisfaction with prisoner release'' enters as
a supporter of the corresponding clause with strength $0.69$. \cut{Statements reporting direct clause
endorsement (``X\% support variant $c$'') are not used as reasoning.}
While this approach is token-intensive and relies on human oversight, we believe that it makes good use of public reports and LLMs, grounding arguments in citable polling data, while providing a reasonable preliminary assessment of our approach before it is deployed in the real world.

\begin{figure}[t!]
    \centering
        \includegraphics[width=\linewidth]{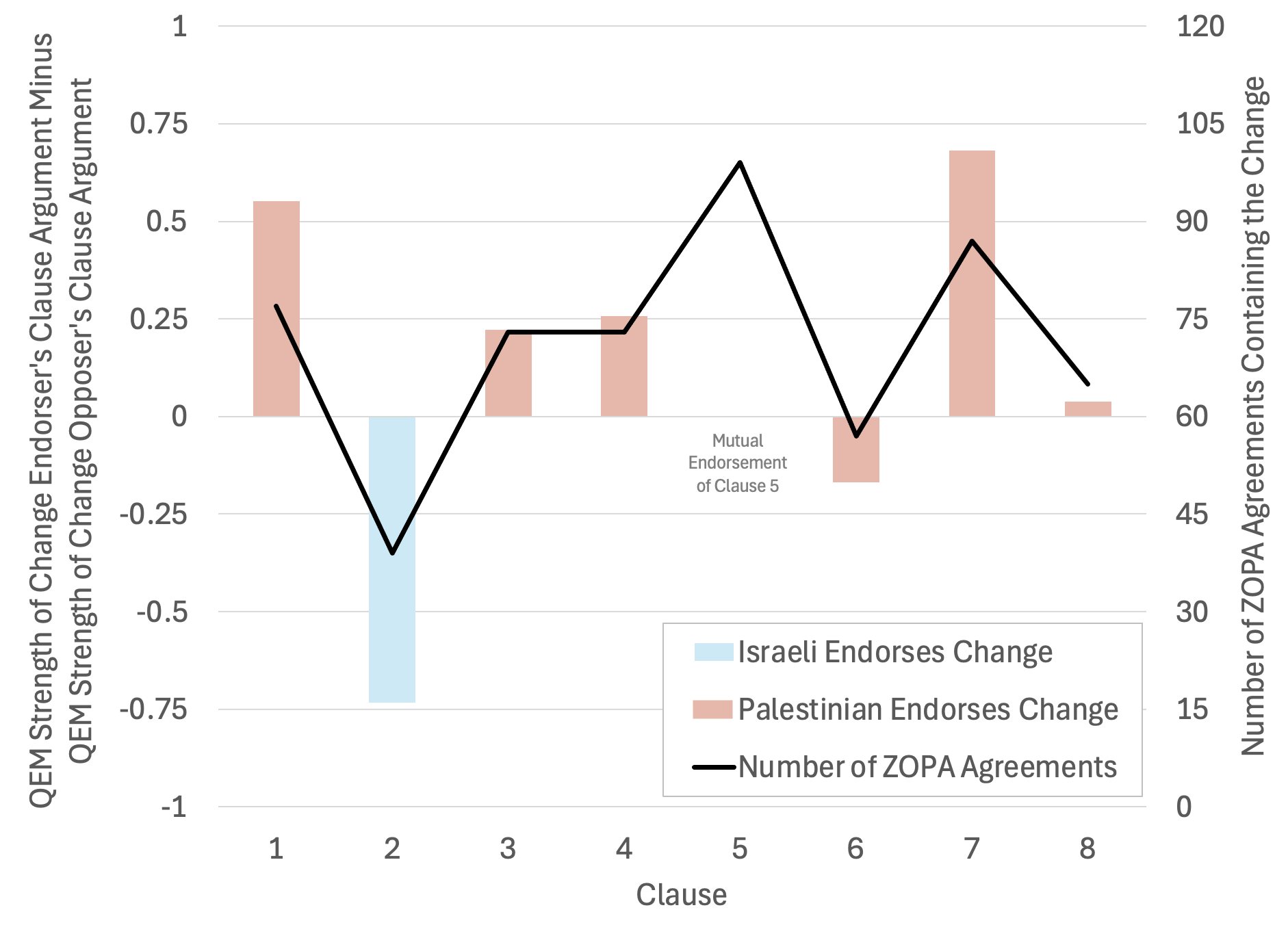}
        \caption{For the data retrieved by the LLM, a comparison, for each clause, of the difference in change endorser versus change opposer clause strength with number of agreements in the ZOPA which contain the clause.}
        \label{fig:llm_chart}
\end{figure}

The approach retrieves 29 validated reasoning arguments, 14 for Israelis (2 in favour of changes, 12 opposing changes) and 15 for Palestinians (11 in favour of changes, 4 opposing changes), drawn from seven distinct opinion-poll report documents, published in 2024-2025. Figure \ref{fig:llm_chart} illustrates the results from this experiment. 
The chart illustrates the potential of our approach in that clauses where the strengths of the clause arguments endorsing a change outweighed those opposing the change (determined by the reasoning) resulted in that change being included in more ZOPA agreements, and vice versa for the opposite case. Particularly encouraging are the facts that: the mutually endorsed change (Clause 5) was in the most ZOPA agreements; the clause with the most positive combined reasoning was in the next most ZOPA agreements (Clause 7) and the clause with the most negative combined reasoning was in the fewest ZOPA agreements (Clause 2). We believe this provides encouraging evidence for the real-world suitability of our approach.

\section{Related Work}
\label{sec:related}

There is a vast body of work on gradual semantics\cut{ for argumentation frameworks}, e.g. \cut{for those which restrict the relations to those}considering only relations of attack \cite{Besnard_01,Leite_11} or support \cite{Amgoud_16}, or those which do not include a base score \cut{on arguments }\cite{Amgoud_08}. 
Those for QBAFs are arguably more popular \cut{are those which include attacks, supports and base scores }\cite{Gonzalez_21,Yun_21,Wang_24}, which potentially align with human reasoning~\cite{Polberg_18,Vesic_22}.
Various analyses of gradual semantics' behaviour have been undertaken~\cite{Mossakowski_18,Delobelle_19,Oren_22,Yu_23,Kampik_24,Anaissy_25}, the findings from which may be useful in our setting, e.g. \cut{extracting }explanations of \cut{the }strengths\cut{ in gradual semantics, potentially giving} for deeper analysis of \cut{respondents' }reasoning. \cut{Open-source implementations for gradual semantics' large scale deployment are also available~\cite{Potyka_22,Alarcon_24}.}
\cut{An outstanding issue in this area is that the strengths computed by many gradual semantics, e.g. DF-QuAD and QEM, do not converge in cyclic graphs due to their recursive nature \cite{Gabbay_15,Anaissy_24}, a limitation which would need to be addressed if we were to allow for cycles in our reasoning arguments.}
\cut{Recently, there have also been a number of studies which have investigated the use of gradual semantics in structured argumentation \cite{Jedwabny_20,Spaans_21,Heyninck_23,Skiba_23,Prakken_24,Rapberger_25,Rago_25_SG}, which allows for more expressivity and depth in the knowledge representation and analysis.}
Gradual semantics' handling of uncertainty and incomplete information has also led to a number of applications in real-world contexts, e.g. fraud detection \cite{Chi_21}, judgmental forecasting \cite{Irwin_22_KR} and various forms of explainable AI \cite{Potyka_21,Potyka_23,Rago_21,Rago_25}. 
\cut{including for neural networks \cite{Potyka_21}, random forests \cite{Potyka_23}, recommender systems \cite{Rago_21} and NLP-driven review aggregation \cite{Rago_25}.} 
To our knowledge, they have not yet been applied to real-world peace agreements. 
Argumentation has also been deployed successfully in negotiation \cite{Kakas_06,Amgoud_11,Bonzon_12,Fossey_25} and automated persuasion \cite{Hadjinikolis_13,Hunter_18,Calegari_20,Donadello_22,Kampik_23}. None of these approaches use gradual argumentation, highlighting the potential of cross-fertilisations with our work. 



\section{Conclusions}
\label{sec:conclusions}

In this paper, we introduced a novel, tailored QBAF for representing citizens' reasoning about peace agreements and showed how merging opposing parties' QBAFs can reveal ZOPAs grounded in evidence-based reasoning. Our theoretical analysis demonstrates that  
  gradual semantics satisfying balance, (strict) monotonicity and duality naturally produce intuitive rankings over agreements. 
\ARR{The empirical evaluation on the Israeli-Palestinian conflict tests the framework against survey data from both existing work and retrieved from an LLM, showing reasonable correlation with the existing data and its suitability for real-world deployment.}
\cut{preferences measured independently of it: with the clause layer populated by the causal
survey estimates of \cite{Cavatorta_25}, the argumentative ranking correlates  with the
shares of respondents who rank an agreement above the status quo. }
\cut{Because the merge is compensatory, it reveals common ground where the parties trade a demand on one clause for a concession on another, yet it is less appropriate for negotiations where a party holds a red line.}
This work shows that argumentation has the potential to assist negotiators in identifying feasible common ground, even amid deeply polarised public discourse.


Our study opens several avenues for future work. One is an empirical evaluation involving reasoning elicited from actual survey respondents or structured interviews processed via NLP\cut{, with the aim of recovering the causal parameters estimated in \cite{Cavatorta_25}}. Scaling the approach to nationally representative samples would require developing efficient methods to merge thousands of individual QBAFs. Methodologically, future work could include developing principled protocols for base score elicitation, 
potentially informed by behavioural principles from behavioural economics. These advances would naturally lend themselves to empirical analyses that pinpoint  which reasoning arguments are the strongest barriers to agreements and identify arguments that, if introduced or reframed, would shift both sides' QBAFs towards mutual acceptability and ultimately support conflict resolution. 


We believe that our contributions highlight the potential of argumentation \emph{in general} in this setting\cut{, and we envisage various extensions of our approach that may be able to capture more complicated dynamics between arguments}. For example, allowing for set-attacks and set-supports\cut{, i.e. attacks and supports which are from multiple arguments towards one particular argument in the spirit of} \cite{Berthold_24} may allow us to model conditional dependencies between clauses. 
Also, adopting a model for approximation of the base scores, \cut{rather than assuming all are equal or asking respondents directly, would be possible. For example,}\ARR{e.g.} preferences (which may be more intuitive to respondents) could instead be elicited from respondents and converted to base scores, as in \cite{Civit_26}. 
Other formalisms, e.g. edge-weighted QBAFs \cite{Yin_26}, probabilistic argumentation \cite{Hunter_17} or structured argumentation \cite{Garcia_04,Toni_14,Modgil_14}, could also provide additional benefits in expressivity.
\cut{We would also leave to future work the investigation of our approach's applicability to other related settings, e.g. contract negotiation, to which (different) argumentative formalisms have already been applied \cite{Dung_08}.} 
Similarly, it would be interesting to \cut{study the use of our setting in}\ARR{assess our method with} downstream tasks, e.g. dynamic opinion polling \cite{Rago_17} or automated persuasion \cite{Donadello_22}.

\newpage

\bibliography{bib}


\newpage
\section*{Supplementary Material}

In this supplementary material, we give additional definitions and the proofs for the theoretical work. \ifthenelse{\boolean{arxivversion}}{}{For the LLM prompts, please see the code and data supplement.}

\subsection*{Additional Definitions}

In the proofs, in order to formalise chains of reasoning from one argument to another via the attack and support relations, for any
$x_i, x_j \in \Args$,
we let a \emph{path} from $x_i$ to $x_j$ be defined as $(x_0,x_1),$ $\ldots,$ $(x_{n-1}, x_{n})$ for some $n>0$
, where $x_0 = x_i$, $x_n = x_j$ and, for any $1 \leq k \leq n$, $(x_{k-1}, x_{k}) \in \Atts \cup \Supps$.
We will use $\argpaths(\QBAF,x_i,x_j)$ to denote the set of all paths between any $x_i, x_j \in \Args$, and we treat paths as sets of pairs. 

The \emph{DF-QuAD semantics} \cite{Rago_16} is a gradual semantics such that for any $x_i \in \Args$, 
    $\GS(\QBAF,x_i) = c(\BS(x_i),\Sigma(\GS(\QBAF, \Atts(x_i))),\Sigma(\GS(\QBAF,\Supps(x_i))))$ 
where, for any $S \subseteq \Args$, $\GS(\QBAF,S)=(\GS(\QBAF,x_1),\ldots,\GS(\QBAF,x_k))$ for $(x_1,\ldots,x_k)$, an arbitrary permutation of $S$, and: 
    %
    $\Sigma$ is such that $\Sigma(())=0$, where $()$ is an empty sequence, and, for $v_1,\ldots,v_n \in [0,1]$ ($n \geq 1$), 
    if $n=1$, then $\Sigma((v_1))=v_1$; if $n=2$, then $\Sigma((v_1,v_2))= v_1 + v_2 - v_1\cdot v_2$; and 
    if $n>2$, then $\Sigma((v_1,\ldots,v_n)) = \Sigma (\Sigma((v_1,\ldots, v_{n-1})),v_n)$; 
    %
    $c$ is such that, for $v^0,v^-,v^+ \in [0,1]$,
    if $v^-\geq v^+$, then $c(v^0,v^-,v^+)=v^0-v^0\cdot| v^+ - v^-|$ and
    if $v^-< v^+$, then $c(v^0,v^-,v^+)=v^0+(1-v^0)\cdot| v^+ - v^-|$.
    %

The \emph{QEM semantics}\footnote{We 
define a simplified gradual semantics here for the case of acyclic graphs.} \cite{Potyka_18} is a gradual semantics such that for any $x_i \in \Args$,
    $\GS(\QBAF,x_i) = \BS(x_i) + (1 - \BS(x_i)) \cdot h(E_{x_i}) -  \BS(x_i) \cdot h(-E_{x_i})$ 
where $E_{x_i} = \sum_{x_j \in \Supps(x_i)}{\GS(\QBAF,x_j)} - \sum_{x_k \in \Atts(x_i)}{\GS(\QBAF,x_k)}$ and for all $v \in \mathbb{R}$, $h(v) = \frac{\max\{v,0\}^2}{1+\max\{v,0\}^2}$.

\subsection*{Proofs}

\begin{dummycorollary}
\label{dumcor:P_mirror}
    For any $p_j, p_k \in \ArgsP$, if $\Atts(p_j) = \Supps(p_k)$, $\Supps(p_j) = \Atts(p_k)$ and $\GS$ satisfies duality, then $\GS(\QBAF,p_j) = 1-\GS(\QBAF,p_k)$.
\end{dummycorollary}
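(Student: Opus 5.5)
This follows by instantiating the definition of duality with $x_i = p_j$ and $x_j = p_k$, so the only work is checking its three hypotheses. Two of them, $\Atts(p_j) = \Supps(p_k)$ and $\Supps(p_j) = \Atts(p_k)$, are given directly in the statement. The remaining hypothesis is the base-score condition $\BS(p_j) = 1 - \BS(p_k)$. By item 3 of Definition~\ref{def:QBAF}, every agreement argument has base score $0.5$, so $\BS(p_j) = 0.5 = 1 - 0.5 = 1 - \BS(p_k)$.

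With all three conditions in place, duality yields $\GS(\QBAF,p_j) = 1 - \GS(\QBAF,p_k)$. This holds for the arbitrary QBAF $\QBAF$ representing $\user_i$'s reasoning on $\Deals$ that $\GS$ is applied to.

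There is no genuine obstacle. The one point worth stating explicitly is that duality quantifies over all pairs of arguments in $\Args$. Since $p_j, p_k \in \ArgsP \subseteq \Args$, the instantiation is legitimate.

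The same argument carries over verbatim to a merged QBAF $\QBAF^*$. Definition~\ref{def:merged} preserves base scores, so agreement arguments there still have base score $0.5$.
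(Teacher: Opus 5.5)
Your proposal is correct and matches the paper's proof: both note that every agreement argument has base score $0.5$ by Definition~\ref{def:QBAF}, which supplies the missing base-score hypothesis $\BS(p_j) = 1 - \BS(p_k)$, and then apply duality directly. Your closing remark about merged QBAFs is a harmless extra that the paper does not need for this corollary.
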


\begin{proof}
    By Definition \ref{def:QBAF}, $\forall p_l \in \ArgsP$, $\BS(p_l) = 0.5$. Then, the proof follows directly from the definition of duality.
\end{proof}

\begin{dummylemma}
\label{dumlem:clause}
    For any $p_j, p_k \in \ArgsP$, if $\Supps(p_j) \supset \Supps(p_k)$ (and thus $\Atts(p_j) \subset \Atts(p_k)$) and $\GS$ satisfies monotonicity, then $\GS(\QBAF,p_j) \geq \GS(\QBAF,p_k)$.  
\end{dummylemma}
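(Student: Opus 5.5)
The plan is to apply the second clause of monotonicity directly to the pair $x_i = p_k$, $x_j = p_j$. Three premises are needed: $\BS(p_k) \leq \BS(p_j)$, $\Atts(p_k) \geq_\GS \Atts(p_j)$ and $\Supps(p_k) \leq_\GS \Supps(p_j)$. Once they hold, monotonicity yields $\GS(\QBAF,p_k) \leq \GS(\QBAF,p_j)$, which is the claim.

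The base score premise is immediate. By Definition \ref{def:QBAF}, $\BS(p_k) = \BS(p_j) = 0.5$.

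For the relational premises, I will first use Definition \ref{def:QBAF} to note that every clause argument $c_l \in \ArgsC$ either attacks or supports each agreement argument, and never both. This gives $\Atts(p) \cup \Supps(p) = \ArgsC$ and $\Atts(p) \cap \Supps(p) = \emptyset$ for every $p \in \ArgsP$. The only other relations in the framework point from $\ArgsR$ into $\ArgsR \cup \ArgsC$, so no reasoning argument attacks or supports an agreement. It follows that $\Supps(p_j) \supset \Supps(p_k)$ implies $\Atts(p_j) = \ArgsC \setminus \Supps(p_j) \subset \ArgsC \setminus \Supps(p_k) = \Atts(p_k)$, which justifies the parenthetical in the statement.

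The key step is to show that set inclusion yields the $\geq_\GS$ comparison. Suppose $B \subseteq A \subseteq \Args$. Then $z(B) \subseteq z(A)$ as multisets, because both are read off the same strengths $\GS(\QBAF,\cdot)$ in the same QBAF and $z$ only discards zero-strength elements. Mapping each element of $B$ to itself gives an injective $f$ from $z(B)$ to $z(A)$ with $\GS(\QBAF,f(x)) = \GS(\QBAF,x)$, so $A \geq_\GS B$. Applying this twice gives $\Atts(p_k) \geq_\GS \Atts(p_j)$ and $\Supps(p_j) \geq_\GS \Supps(p_k)$, i.e. $\Supps(p_k) \leq_\GS \Supps(p_j)$.

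The only potential obstacle is a notational one: the comparison measure's injective map is phrased on the multiset of strengths $z(\cdot)$, while the mapping actually needed is on arguments. I will handle this by taking the identity on the underlying arguments, restricted to those of nonzero strength, and noting that it induces the required map on the multisets. This requires no quantitative estimate, so strictness is not needed, consistent with the statement using only (non-strict) monotonicity.
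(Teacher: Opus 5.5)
Your proof is correct and follows the same route as the paper's: use Definition~\ref{def:QBAF} to get equal base scores of $0.5$ and $\Atts(p)\cup\Supps(p)=\ArgsC$ for every agreement argument, then apply monotonicity directly. You also make explicit a step the paper leaves implicit, namely that set inclusion of attackers and supporters gives the required $\geq_\GS$ comparisons via the identity map on the nonzero-strength arguments.
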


\begin{proof}
    By Definition \ref{def:QBAF}, $p_j$ and $p_k$ are such that $\Atts(p_j) \cup \Supps(p_j) = \Atts(p_k) \cup \Supps(p_k) = \ArgsC$ and $\BS(p_j) = \BS(p_k) = 0.5$.
    Then, by monotonicity, $\GS(\QBAF,p_j) \geq \GS(\QBAF,p_k)$.
\end{proof}

\begin{dummyproposition}
\label{dumprop:topbottom}
    For $\deal_j, \deal_k \in \Deals$,  if $\deal_j = \BenClauses^i$, $\deal_k = \ConClauses^i$ and $\GS$ satisfies monotonicity, then $\deal_j \succeq_\GS^\QBAF \deal_l$ $\forall \deal_l \in \Deals$ and $\deal_k \preceq_\GS^\QBAF \deal_m$ $\forall \deal_m \in \Deals$.
\end{dummyproposition}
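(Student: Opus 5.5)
The plan is to reduce the statement to the second clause of monotonicity by comparing the attackers and supporters of $p_j$ and $p_k$ with those of an arbitrary $p_l$. I would follow the proof of Lemma~\ref{lem:clause}, but work directly with monotonicity rather than invoking the lemma. The lemma needs a strict superset of supporters, and that relation can fail when $\deal_l = \deal_j$. Comparing arbitrary agreements through set inclusion avoids this issue.

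First I determine the attackers and supporters of $p_j$ and $p_k$ from Definition~\ref{def:QBAF}. Since $\deal_j = \BenClauses^i$, every $c \in \ArgsCP$ corresponds to a clause in $\deal_j$ and so supports $p_j$. Every $c \in \ArgsCN$ corresponds to a clause not in $\deal_j$ and so also supports $p_j$. Hence $\Supps(p_j) = \ArgsC$ and $\Atts(p_j) = \emptyset$. The same reasoning applied to $\deal_k = \ConClauses^i$ gives $\Atts(p_k) = \ArgsC$ and $\Supps(p_k) = \emptyset$.

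Second, I take an arbitrary $\deal_l \in \Deals$. By Definition~\ref{def:QBAF}, the attackers and supporters of $p_l$ are disjoint sets of clause arguments, since only clause arguments attack or support agreement arguments. So $\Supps(p_l) \subseteq \ArgsC = \Supps(p_j)$ and $\Atts(p_l) \supseteq \emptyset = \Atts(p_j)$. I then need to lift these inclusions to the comparison measure: if $B \subseteq A$, then $A \geq_\GS B$. For this I take the identity injection from $z(B)$ into $z(A)$. It is well defined because the nonzero-strength elements of $B$ are exactly the nonzero-strength elements of $A$ that lie in $B$. It trivially satisfies the required inequality of strengths. Together with $\BS(p_l) = \BS(p_j) = 0.5$, this gives $\BS(p_l) \leq \BS(p_j)$, $\Atts(p_l) \geq_\GS \Atts(p_j)$ and $\Supps(p_l) \leq_\GS \Supps(p_j)$. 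Monotonicity then yields $\GS(\QBAF,p_l) \leq \GS(\QBAF,p_j)$, and Definition~\ref{def:ranking} turns this into $\deal_j \succeq_\GS^\QBAF \deal_l$.

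Symmetrically, for $\deal_m$ I have $\Atts(p_k) = \ArgsC \supseteq \Atts(p_m)$ and $\Supps(p_k) = \emptyset \subseteq \Supps(p_m)$. Monotonicity then gives $\GS(\QBAF,p_k) \leq \GS(\QBAF,p_m)$, that is, $\deal_k \preceq_\GS^\QBAF \deal_m$.

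The only delicate step is lifting set inclusion to $\geq_\GS$. The comparison is defined on multisets of strengths with zero-strength arguments removed, so I must check that the injection is well defined on multisets. Since both multisets are computed in the same QBAF, each argument keeps the same strength in $z(A)$ and $z(B)$, so sending each argument to itself works.
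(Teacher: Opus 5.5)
Your proof is correct and follows the paper's argument. You read off from Definition~\ref{def:QBAF} that $\Supps(p_j) = \Atts(p_k) = \ArgsC$ and $\Atts(p_j) = \Supps(p_k) = \emptyset$, then conclude by monotonicity; the paper goes through Lemma~\ref{lem:clause} for $p_l \neq p_j$ and leaves $p_l = p_j$ to reflexivity, while you apply monotonicity directly and also spell out the identity-injection step lifting set inclusion to $\geq_\GS$, which the paper leaves implicit.
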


\begin{proof}
    By Definition \ref{def:QBAF}, $\forall p_n \in \ArgsP$,  $\Atts(p_n) \cup \Supps(p_n) = \ArgsC$ and $\BS(p_n) = 0.5$.
    By the same definition, for any $C_o \in \BenClauses^i$, $c_o \in \Supps(p_j)$ since $C_o \in P_j$ and $c_o \in \Atts(p_k)$ since $C_o \nin P_k$, and conversely for any $C_p \in \ConClauses^i$, $c_p \in \Supps(p_j)$ since $C_p \nin P_j$ and $c_p \in \Atts(p_k)$ since $C_p \in P_k$.  
    Thus, it must be the case that $\Atts(p_j) = \Supps(p_k) = \emptyset$ and $\Supps(p_j) = \Atts(p_k) = \ArgsC$.
    Then, also by Definition \ref{def:QBAF}, any $p_l \in \ArgsP \setminus \{ p_j \}$ is such that $\Supps(p_l) \subset \Supps(p_j)$ and thus $\Atts(p_l) \supset \Atts(p_k)$.
    Similarly, any $p_m \in \ArgsP \setminus \{ p_k \}$ is such that $\Atts(p_m) \subset \Atts(p_k)$ and thus $\Supps(p_m) \supset \Supps(p_k)$.
    Then, by Lemma \ref{lem:clause}, it must be the case that $\GS(\QBAF,p_j) \geq \GS(\QBAF,p_l)$ and $\GS(\QBAF,p_k) \leq \GS(\QBAF,p_m)$, and thus, by Definition \ref{def:ranking}, $\deal_j \succeq_\GS^\QBAF \deal_l$ and $\deal_k \preceq_\GS^\QBAF \deal_m$.
\end{proof}



\begin{dummyproposition}
\label{dumprop:preservation}
    If $\GS$ satisfies balance, then
    $\GS(\QBAF^*, c_i^j) = \GS(\QBAF^j, c_i^j)$ $\forall c_i^j \in \ArgsC^*$.
\end{dummyproposition}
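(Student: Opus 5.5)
The approach is to exploit the acyclic, layered structure of the QBAFs. In the merged QBAF $\QBAF^*$, nothing upstream of a clause argument $c_i^j$ changes: every argument in its ``ancestor'' subgraph is the same as in $\QBAF^j$. So I will show that the sub-QBAF consisting of $c_i^j$ and all arguments with a path to it (in the sense of $\argpaths$) is identical in $\QBAF^*$ and $\QBAF^j$. I will then argue that the strengths coincide by induction along this acyclic subgraph.

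\textbf{Step 1: the ancestor subgraph is preserved.}
\begin{enumerate}
\item By Definition~\ref{def:QBAF}, every attack or support with target in $\ArgsR \cup \ArgsC$ originates in $\ArgsR$. Agreement arguments only ever appear as targets. Hence any path ending at $c_i^j$ passes only through reasoning arguments, apart from its endpoint.
\item By Definition~\ref{def:merged}, $\Atts^*$ and $\Supps^*$ are unions of the individual relations, and clause and reasoning arguments are citizen-specific and disjoint.
\item It follows that, for every $x \in \ArgsR^j \cup \ArgsC^j$, $\Atts^*(x) = \Atts^j(x)$ and $\Supps^*(x) = \Supps^j(x)$, and $\BS^*(x) = \BS^j(x)$.
\item Concretely, no edge of another citizen $\QBAF^l$ ($l \neq j$) can target $x$. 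Such an edge would have to come from $\ArgsR^l$ and land in $\ArgsR^l \cup \ArgsC^l$, which is disjoint from $x$'s sub-QBAF.
\end{enumerate}

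\textbf{Step 2: induction on depth.} Since we restrict to acyclic QBAFs, I induct on the length of the longest path into $x$, for $x \in \ArgsR^j \cup \ArgsC^j$.
\begin{enumerate}
\item Base case: $x$ has no attackers or supporters in either QBAF. Then $\Atts(x) =_\GS \Supps(x)$ trivially, since both $z$-multisets are empty. Balance then gives $\GS(\QBAF^*,x) = \BS^*(x) = \BS^j(x) = \GS(\QBAF^j,x)$.
\item Inductive step: by the hypothesis, the strengths of all attackers and supporters of $x$ agree across $\QBAF^*$ and $\QBAF^j$. I then need the semantics to assign the same value to $x$ in both.
\end{enumerate}

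\textbf{Main obstacle.} Balance alone only pins down the strength in the equal case, and otherwise gives inequalities. It does not by itself guarantee that identical local inputs (base score and multisets of parent strengths) yield identical outputs across two different QBAFs. I would handle this first by making explicit the standard assumption that gradual semantics for acyclic QBAFs are computed locally and recursively from the base score and the parents' strengths, as DF-QuAD and QEM both are. Under that assumption the inductive step is immediate. If a cross-QBAF principle is needed formally, I would invoke the directionality/locality behaviour implicit in the recursive definitions, and note that the balance hypothesis covers the leaf case.

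Applying the result with $x = c_i^j$ yields $\GS(\QBAF^*,c_i^j) = \GS(\QBAF^j,c_i^j)$. The same argument gives preservation for reasoning arguments, matching the remark after the proposition.
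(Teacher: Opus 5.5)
Your route is the paper's: show that every argument with a path to $c_i^j$ in $\QBAF^*$ is a reasoning argument of citizen $j$ whose base score, attackers and supporters are unchanged from $\QBAF^j$, then propagate equality of strengths through this acyclic ancestor subgraph. Your Step~1 is sound. Like the paper, it relies on different citizens' clause and reasoning arguments being disjoint, which Definition~\ref{def:merged} only asserts informally. The two proofs diverge at the inductive step. The paper simply writes ``By balance, it must then be the case that $\GS(\QBAF^*, r_k^l) = \GS(\QBAF^j, r_k^l)$''. You correctly observe that balance fixes $\GS(\QBAF,x)$ to $\BS(x)$ only when $\Atts(x) =_\GS \Supps(x)$, otherwise gives only inequalities, and says nothing about how values in two different QBAFs are related.

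The obstacle you flag is genuine, and it lies in the statement rather than in your argument: balance alone does not imply the proposition. Here is a counterexample.
\begin{itemize}
\item On acyclic QBAFs, define $\GS$ recursively by $\GS(\QBAF,x) = \BS(x)/|\Args|$ if $\Atts(x) >_\GS \Supps(x)$, and $\GS(\QBAF,x) = \BS(x)$ otherwise.
\item Each case test depends only on the parents' strengths, and all three clauses of balance hold.
\item Let a leaf $r^j$ with $\BS(r^j)=0.5$ be the sole attacker of $c_1^j$. Then $\Atts(c_1^j) >_\GS \Supps(c_1^j)$ in both QBAFs.
\item Hence $\GS(\QBAF^j,c_1^j) = 0.5/|\Args^j| \neq 0.5/|\Args^*| = \GS(\QBAF^*,c_1^j)$ as soon as the other citizen contributes a clause argument.
\end{itemize}

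So some extra hypothesis is unavoidable, and yours is the right one: $\GS(\QBAF,x)$ is a fixed function of $\BS(x)$ and the multisets of its attackers' and supporters' strengths, as for DF-QuAD and QEM. State it as a hypothesis of the proposition rather than as a background convention. Under it, balance is actually redundant. A leaf's strength is that same function applied to $\BS(x)$ and two empty multisets in both QBAFs, so your base case needs no appeal to balance either. Alternatively, you could assume that an argument's strength depends only on the sub-QBAF induced by it and the arguments with a path to it. Then your Step~1 closes the proof with no induction at all.
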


\begin{proof}
    By Definitions \ref{def:QBAF} and \ref{def:merged} it can be seen that $\forall r_k^l \in \ArgsR^*$ such that $ \argpaths(\Args^*,r_k^l,c_i^j) \neq \emptyset$, $l=j$, $\BS^*(r_k^l) = \BS^l(r_k^l)$, $\Atts^*(r_k^l) = \Atts^l(r_k^l)$ and $\Supps^*(r_k^l) = \Supps^l(r_k^l)$.
    By balance, it must then be the case that $\GS(\QBAF^*, r_k^l) = \GS(\QBAF^j, r_k^l)$ and, by the same logic, we can deduce that $\GS(\QBAF^*, c_i^j) = \GS(\QBAF^j, c_i^j)$.
\end{proof}

\begin{dummytheorem}[Balance of ZOPAs]
\label{dumthm:zopa}
    For any $p_k \in \ArgsP^*$ in $\QBAF^*$ with $\GS$, where
    $\GS$ satisfies balance and strict monotonicity:
        if $\Atts^*(p_k) <_\GS \Supps^*(p_k)$, then $\deal_k \in \ZOPA(\QBAF^*,\GS)$; and
        if $\Atts^*(p_k) \geq_\GS \Supps^*(p_k)$, then $\deal_k \nin \ZOPA(\QBAF^*,\GS)$. 
\end{dummytheorem}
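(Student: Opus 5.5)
The plan is to split along the two clauses of the statement and use balance for the non-strict parts. Strict monotonicity is needed only to rule out the boundary value $0.5$ in the first clause. Throughout I use that $\BS^*(p_k)=0.5$, which holds by Definitions \ref{def:QBAF} and \ref{def:merged}. The ZOPA condition of Definition \ref{def:zopa} is therefore exactly $\GS(\QBAF^*,p_k)>\BS^*(p_k)$.

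\textbf{Second clause.} Assume $\Atts^*(p_k)\geq_\GS\Supps^*(p_k)$. I first show this splits into two cases.
\begin{itemize}
\item $\Atts^*(p_k)>_\GS\Supps^*(p_k)$. Balance gives $\GS(\QBAF^*,p_k)\leq 0.5$.
\item $\Supps^*(p_k)\geq_\GS\Atts^*(p_k)$ also holds. Then there are injective, dominance-preserving maps in both directions between the finite multisets $z(\Atts^*(p_k))$ and $z(\Supps^*(p_k))$. A short counting argument on the sorted multisets shows they are equal, so $\Atts^*(p_k)=_\GS\Supps^*(p_k)$, and balance gives $\GS(\QBAF^*,p_k)=0.5$.
\end{itemize}
In both cases the strength is not strictly above $0.5$, so $\deal_k\nin\ZOPA(\QBAF^*,\GS)$.

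\textbf{First clause.} Assume $\Atts^*(p_k)<_\GS\Supps^*(p_k)$. Balance only yields $\GS(\QBAF^*,p_k)\geq 0.5$. To get strictness I will compare $p_k$ with an auxiliary argument $x$ whose strength is exactly $0.5$ and which is strictly dominated by $p_k$ in the sense of strict monotonicity. The construction is as follows.
\begin{itemize}
\item Extend $\QBAF^*$ with a fresh argument $x$ with $\BS(x)=0.5$, no outgoing edges, and the same attackers as $p_k$, i.e.\ $(y,x)$ is an attack for every $y\in\Atts^*(p_k)$.
\item For each $y\in\Atts^*(p_k)$, add a fresh copy $y'$ with the same base score and the same attackers and supporters as $y$, and let $y'$ support $x$. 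Fresh copies are needed because $\Atts$ and $\Supps$ must be disjoint.
\end{itemize}
By the equality clause of monotonicity, each $y'$ has the same strength as $y$. Hence $\Supps(x)=_\GS\Atts(x)$, and balance gives $\GS(x)=0.5$.

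Now compare $x$ with $p_k$. They have equal base scores and $\Atts(x)=_\GS\Atts^*(p_k)$. Moreover $\Supps(x)=_\GS\Atts^*(p_k)<_\GS\Supps^*(p_k)$, so the support relation is strictly dominated. Strict monotonicity then yields $0.5=\GS(x)<\GS(p_k)$, hence $\deal_k\in\ZOPA(\QBAF^*,\GS)$.

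\textbf{Main obstacle.} The delicate step is justifying that adding $x$ and the copies $y'$ leaves the strengths of the original arguments, in particular $p_k$ and its attackers and supporters, unchanged. Since $x$ and the copies have no outgoing edges into $\Args^*$, I would argue this exactly as in the proof of Proposition \ref{prop:preservation}: every argument upstream of $p_k$ keeps its base score, attackers and supporters, so its strength is preserved, propagating along the acyclic structure. I expect this locality assumption, together with the fact that $\GS$ is defined on arbitrary acyclic QBAFs and not only on those of Definition \ref{def:merged}, to need to be made explicit.
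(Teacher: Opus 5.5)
Your plan is the paper's argument. For the first clause you compare $p_k$ with a reference argument of base score $0.5$ that has the same attackers as $p_k$ and matching supporters, so balance fixes its strength at $0.5$ and strict monotonicity puts $p_k$ strictly above it. For the second clause you use balance. You are more careful than the paper in two places:
\begin{itemize}
\item The paper simply posits some $p_l\in\ArgsP^*$ with $\Atts^*(p_l)=_\GS\Supps^*(p_l)=_\GS\Atts^*(p_k)$. Such a $p_l$ generally does not exist, because every agreement argument has the same clause arguments as neighbours, only split differently into attackers and supporters. You construct one instead.
\item The paper goes from $\Atts^*(p_k)\geq_\GS\Supps^*(p_k)$ directly to $\GS(\QBAF^*,p_k)\leq 0.5$, but balance only covers the $>_\GS$ and $=_\GS$ cases. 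Your mutual-injection argument supplies the missing step. That half of your proof is complete.
\end{itemize}

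The gap is the step you flagged, and it cannot be closed the way you propose. Balance and (strict) monotonicity constrain $\GS$ within one QBAF at a time. Nothing in them relates $\GS(\QBAF',\cdot)$ to $\GS(\QBAF^*,\cdot)$. Copying the proof of Proposition~\ref{prop:preservation} just reproduces that proof's own unjustified cross-QBAF step.

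In fact the first clause does not follow from the stated hypotheses at all. Take:
\begin{itemize}
\item $\Clauses=\{C_1\}$ with $\BenClauses^i=\ConClauses^j=\{C_1\}$;
\item agreements $\deal_1=\emptyset$ and $\deal_2=\{C_1\}$;
\item one reasoning argument $r$ supporting $c_1^i$;
\item all base scores equal to $0.5$.
\end{itemize}
On this single merged QBAF set $\GS(r)=\GS(c_1^j)=0.5$, $\GS(c_1^i)=0.6$, $\GS(p_2)=0.5$ and $\GS(p_1)=0.4$, and use, say, QEM on every other QBAF. A direct check shows every instance of balance and strict monotonicity holds. Yet $\Atts^*(p_2)<_\GS\Supps^*(p_2)$ while $\deal_2\notin\ZOPA(\QBAF^*,\GS)$.

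So an extra hypothesis is unavoidable. Two options work:
\begin{itemize}
\item \emph{Directionality}: an argument's strength depends only on the sub-QBAF of its ancestors. QEM and DF-QuAD satisfy this on acyclic QBAFs. With it, your construction goes through as written.
\item \emph{Duality}, which the paper already treats as essential. This gives a proof inside $\QBAF^*$ with no extension. Let $\deal_{\bar k}=\Clauses\setminus\deal_k$, which lies in $\Deals$ because $\Deals$ is the power set. Its argument $p_{\bar k}$ has $\Atts^*(p_{\bar k})=\Supps^*(p_k)$ and $\Supps^*(p_{\bar k})=\Atts^*(p_k)$. Duality gives $\GS(\QBAF^*,p_{\bar k})=1-\GS(\QBAF^*,p_k)$. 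Strict monotonicity gives $\GS(\QBAF^*,p_{\bar k})<\GS(\QBAF^*,p_k)$. Hence $\GS(\QBAF^*,p_k)>0.5$.
\end{itemize}
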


\begin{proof}
    Let us first prove that if $\Atts^*(p_k) <_\GS \Supps^*(p_k)$, then $P_k \in \ZOPA(\QBAF^*,\GS)$. 
    Let us compare with some $p_l \in \ArgsP^*$ such that $\Atts^*(p_l) =_\GS \Supps^*(p_l) =_\GS \Atts^*(p_k)$. Here, balance would require that $\GS(\QBAF^*,p_l) = \BS^*(p_l) = 0.5$. Then, for it to hold that $\Atts^*(p_k) <_\GS \Supps^*(p_k)$, it must be the case that $\Supps(x_k) >_\GS \Supps(x_l)$ since $\Atts^*(p_k) = \Atts^*(p_l)$. By Definition \ref{def:QBAF}, $\BS^*(p_k) = \BS^*(p_l)=0.5$ and so, by strict monotonicity, $\GS(\QBAF,x_k) > \GS(\QBAF^*,x_l) = 0.5$.
    Then, by Definition \ref{def:zopa}, $P_k \in \ZOPA(\QBAF^*,\GS)$.
    Next, let us prove that if $\Atts^*(p_k) \geq_\GS \Supps^*(p_k)$, then $P_k \nin \ZOPA(\QBAF^*,\GS)$. Straightforwardly, balance requires that $\GS(\QBAF^*,x_k) \leq \BS^*(x_k) = 0.5$. Then, by Definition \ref{def:zopa}, $P_k \nin \ZOPA(\QBAF^*,\GS)$. 
\end{proof}

\begin{dummytheorem}[ZOPAs under Total Disagreement]
\label{dumthm:merged_disagreement}
    If $\BenClauses^i = \ConClauses^j$, $\ConClauses^i = \BenClauses^j$, $\GS(\QBAF^i,c_k^i) = \GS(\QBAF^j,c_k^j)$ $\forall k \in \{ 1, \ldots, |\Clauses| \}$ and $\GS$ satisfies balance, then $\deal_l \simeq_\GS^{\QBAF^*} \deal_m$ $\forall \deal_l, \deal_m \in \Deals$ and $\ZOPA(\QBAF^*,\GS) = \emptyset$.
\end{dummytheorem}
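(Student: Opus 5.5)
I would show that, for every agreement argument $p_l \in \ArgsP^*$, the attackers and supporters in the merged QBAF satisfy $\Atts^*(p_l) =_\GS \Supps^*(p_l)$. Balance then forces $\GS(\QBAF^*,p_l) = \BS^*(p_l) = 0.5$ for every $l$. Both conclusions follow from this single fact. All agreements get equal strength, so $\deal_l \simeq_\GS^{\QBAF^*} \deal_m$ by Definition~\ref{def:ranking}. No strength exceeds $0.5$, so $\ZOPA(\QBAF^*,\GS) = \emptyset$ by Definition~\ref{def:zopa}.

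First I fix the structure of $p_l$'s neighbourhood in $\QBAF^*$. By Definitions~\ref{def:QBAF} and~\ref{def:merged}, the only arguments related to $p_l$ are the clause arguments $c_k^i$ and $c_k^j$ for $k \in \{1,\ldots,|\Clauses|\}$. Each of them either attacks or supports $p_l$, and the edges are inherited unchanged from $\QBAF^i$ and $\QBAF^j$. Reasoning arguments never target agreements.

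Next, I pair $c_k^i$ with $c_k^j$ for each $k$ and show that one of the pair lies in $\Atts^*(p_l)$ and the other in $\Supps^*(p_l)$. Suppose $C_k \in \BenClauses^i$, so $C_k \in \ConClauses^j$. If $C_k \in \deal_l$, then $c_k^i$ supports $p_l$ and $c_k^j$ attacks it. If $C_k \nin \deal_l$, the roles are swapped. The case $C_k \in \ConClauses^i$ is symmetric. This pairing gives a bijection $f$ between $\Atts^*(p_l)$ and $\Supps^*(p_l)$.

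Finally, I show that $f$ preserves strengths. By Proposition~\ref{prop:preservation}, which needs only balance, $\GS(\QBAF^*,c_k^i) = \GS(\QBAF^i,c_k^i)$ and $\GS(\QBAF^*,c_k^j) = \GS(\QBAF^j,c_k^j)$. The hypothesis makes these equal, so the strength multisets of attackers and supporters coincide. Removing zero-strength elements with $z(\cdot)$ does not break this, because paired elements are zero together. Hence $z(\Atts^*(p_l)) = z(\Supps^*(p_l))$, i.e.\ $\Atts^*(p_l) =_\GS \Supps^*(p_l)$.

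The main point needing care is this last step. The strengths in the hypothesis are computed in the individual QBAFs, while balance is applied in $\QBAF^*$, so Proposition~\ref{prop:preservation} is what transfers them. I also need to check that the bijection gives equality of multisets, not just equality of sets, which the pairing by clause index ensures. Everything else is unpacking definitions.
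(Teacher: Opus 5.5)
Your proposal is correct and matches the paper's proof step for step. Pairing $c_k^i$ with $c_k^j$ shows that each agreement's attackers and supporters have identical strength multisets, so $\Atts^*(p_l) =_\GS \Supps^*(p_l)$, and balance then fixes every agreement at $0.5$. Your explicit appeal to Proposition~\ref{prop:preservation}, which carries the hypothesised strengths from $\QBAF^i$ and $\QBAF^j$ into $\QBAF^*$, spells out a step the paper's proof leaves implicit.
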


\begin{proof}
    By Definition \ref{def:QBAF}, $\forall p_n \in \ArgsP^*$,  $\Atts^*(p_n) \cup \Supps^*(p_n) = \ArgsC^*$ and $\BS^*(p_n) = 0.5$.
    By the same definition, for any $C_o \in \BenClauses^i \cap \ConClauses^j$, $c_o^i \in \Supps^*(p_p)$ and $c_o^j \in \Atts^*(p_p)$ for any $P_p \in \Deals$ such that $C_o \in P_p$.
    Meanwhile, $c_o^i \in \Atts^*(p_q)$ and $c_o^j \in \Supps^*(p_q)$ for any $P_q \in \Deals$ such that $C_o \nin P_q$.
    Conversely, for any $C_r \in \ConClauses^i \cap \BenClauses^j$, $c_r^i \in \Atts^*(p_s)$ and $c_r^j \in \Supps^*(p_s)$ for any $P_s \in \Deals$ such that $C_r \in P_s$.
    Meanwhile, $c_r^i \in \Supps^*(p_t)$ and $c_r^j \in \Atts^*(p_t)$ for any $P_t \in \Deals$ such that $C_r \nin P_t$.
    Then, since $\GS(\QBAF^*,c_k^i) = \GS(\QBAF^*,c_k^j)$ $\forall k \in \{ 1, \ldots, |\Clauses| \}$, it must be the case that $\Atts^*(p_l) =_\GS \Supps^*(p_l)$ and $\Atts^*(p_m) =_\GS \Supps^*(p_m)$.
    Balance then requires that $\GS(\QBAF^*,p_l) = \BS^*(p_l) = 0.5$ and $\GS(\QBAF^*,p_m) = \BS^*(p_m) = 0.5$.
    Then, by Definition \ref{def:ranking}, $\deal_l \simeq_\GS^{\QBAF^*} \deal_m$ and, by Definition \ref{def:zopa}, $\ZOPA(\QBAF^*,\GS) = \emptyset$.
\end{proof}

\begin{dummytheorem}[ZOPAs under Total Agreement]
\label{dumthm:merged_agreement}
    If $\deal_k = \BenClauses^i = \BenClauses^j$, $\deal_l = \ConClauses^i = \ConClauses^j$, $|z(\ArgsC)| = |\ArgsC|$ and $\GS$ satisfies balance and strict monotonicity, then $\deal_k \succ_\GS^{\QBAF^*}\!\!~\deal_m$ $\forall \deal_m \in \Deals \setminus \{ \deal_k \}$ and $\deal_l \prec_\GS^{\QBAF^*}\!\!~\deal_n$ $\forall \deal_n \in \Deals \setminus \{ \deal_l \}$. Further, $\deal_k \in \ZOPA(\QBAF^*,\GS)$ and $\deal_l \nin \ZOPA(\QBAF^*,\GS)$.
\end{dummytheorem}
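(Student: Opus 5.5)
Following the pattern of the proofs of Proposition~\ref{prop:topbottom} and Theorem~\ref{thm:merged_disagreement}, the plan is to first pin down the attackers and supporters of every agreement argument in $\QBAF^*$. Then strict monotonicity gives the strict ranking claims, and balance gives ZOPA membership.

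\textbf{Step 1: supporters and attackers of $p_k$ and $p_l$.} By Definitions~\ref{def:QBAF} and~\ref{def:merged}, every $p_n \in \ArgsP^*$ satisfies $\Atts^*(p_n) \cup \Supps^*(p_n) = \ArgsC^* = \ArgsC^i \cup \ArgsC^j$, with these sets disjoint, and $\BS^*(p_n) = 0.5$. Since $\deal_k = \BenClauses^i = \BenClauses^j$, the argument used in the proof of Proposition~\ref{prop:topbottom}, applied to each citizen separately, gives
\[
\Supps^*(p_k) = \Atts^*(p_l) = \ArgsC^*, \qquad \Atts^*(p_k) = \Supps^*(p_l) = \emptyset .
\]
For any other $\deal_m \neq \deal_k$ there is some clause $C_o$ on which $\deal_m$ and $\deal_k$ differ. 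Because both citizens agree on the endorsement of $C_o$, both $c_o^i$ and $c_o^j$ then move from supporting to attacking. Hence $\Supps^*(p_m) \subsetneq \Supps^*(p_k)$ and $\Atts^*(p_m) \supsetneq \Atts^*(p_k) = \emptyset$. The situation for $p_l$ against any $\deal_n \neq \deal_l$ is symmetric.

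\textbf{Step 2: strictness via $z$.} This is the main obstacle. Strict monotonicity is stated through the comparison $\geq_\GS$, which discards zero-strength arguments, so proper set inclusion does not by itself give a strict relation. This is where the hypothesis $|z(\ArgsC)| = |\ArgsC|$ is used: every clause argument has non-zero strength (reading $\ArgsC$ as $\ArgsC^*$, with strengths preserved from the individual QBAFs by Proposition~\ref{prop:preservation}). So $z(\Supps^*(p_m))$ is a proper sub-multiset of $z(\Supps^*(p_k))$. The identity embedding shows $\Supps^*(p_k) \geq_\GS \Supps^*(p_m)$. For strictness, no injection can go from the larger multiset $z(\Supps^*(p_k))$ into the smaller one, since their cardinalities differ; hence $\Supps^*(p_m) \not\geq_\GS \Supps^*(p_k)$. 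Likewise $\Atts^*(p_m) >_\GS \Atts^*(p_k)$, because $z(\Atts^*(p_k)) = \emptyset$ while $z(\Atts^*(p_m)) \neq \emptyset$.

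\textbf{Step 3: conclusions.} The base scores are equal, so strict monotonicity yields $\GS(\QBAF^*,p_k) > \GS(\QBAF^*,p_m)$, which by Definition~\ref{def:ranking} is $\deal_k \succ_\GS^{\QBAF^*} \deal_m$. The same reasoning gives $\deal_l \prec_\GS^{\QBAF^*} \deal_n$. For the ZOPA, $\Atts^*(p_k) = \emptyset <_\GS \Supps^*(p_k)$ by the cardinality argument above, assuming $\Clauses \neq \emptyset$. Theorem~\ref{thm:zopa}, which requires balance and strict monotonicity, then gives $\deal_k \in \ZOPA(\QBAF^*,\GS)$. Similarly $\Atts^*(p_l) \geq_\GS \Supps^*(p_l) = \emptyset$ holds trivially, so the same theorem, or balance directly, gives $\GS(\QBAF^*,p_l) \leq 0.5$ and hence $\deal_l \nin \ZOPA(\QBAF^*,\GS)$.
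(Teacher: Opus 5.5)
Your proposal is correct and follows the paper's proof essentially step for step. You first establish $\Supps^*(p_k)=\Atts^*(p_l)=\ArgsC^*$ and $\Atts^*(p_k)=\Supps^*(p_l)=\emptyset$, then use the non-zero-strength hypothesis to turn the proper inclusions into strict $>_\GS$ comparisons, apply strict monotonicity for the two ranking claims, and invoke Theorem~\ref{thm:zopa} for ZOPA membership. Two points go beyond the paper: your explicit cardinality argument for the strict comparisons, and your caveat that $\Clauses\neq\emptyset$ is needed for $\deal_k\in\ZOPA(\QBAF^*,\GS)$; both make precise what the paper leaves implicit.
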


\begin{proof}
    By Definition \ref{def:QBAF}, $\forall p_o \in \ArgsP^*$,  $\Atts^*(p_o) \cup \Supps^*(p_o) = \ArgsC^*$ and $\BS^*(p_o) = 0.5$.
    By the same definition, for any $C_p \in \BenClauses^i \cap \BenClauses^j$, $c_p^i, c_p^j \in \Supps^*(p_k)$ since $C_p \in P_k$ and $c_p^i, c_p^j \in \Atts^*(p_l)$ since $C_p \nin P_l$.
    Conversely, for any $C_q \in \ConClauses^i \cap \ConClauses^j$, $c_q^i, c_q^j \in \Supps^*(p_k)$ since $C_q \nin P_k$ and $c_q^i, c_q^j \in \Atts^*(p_l)$ since $C_q \in P_l$.
    Thus, it must be the case that $\Atts^*(p_k) = \Supps^*(p_l) = \emptyset$ and $\Supps^*(p_k) = \Atts^*(p_l) = \ArgsC^*$.
    Then, also by Definition \ref{def:QBAF}, 
    $\Supps^*(p_m) \subset \Supps^*(p_k)$ and thus $\Atts^*(p_m) \supset \Atts^*(p_k)$.
    Similarly, 
    $\Atts^*(p_n) \subset \Atts^*(p_l)$ and thus $\Supps^*(p_n) \supset \Supps^*(p_l)$.
    Then, by similar logic to Lemma \ref{lem:clause} but taking into account that all clauses have non-zero strength, i.e. $|z(\ArgsC^*)| = |\ArgsC^*|$, it must be the case that $\Atts^*(p_k) <_\GS \Atts^*(p_m)$, $\Supps^*(p_k) >_\GS \Supps^*(p_m)$, $\Atts^*(p_l) >_\GS \Atts^*(p_n)$ and $\Supps^*(p_l) <_\GS \Supps^*(p_n)$. Given that $\BS^*(p_k) = \BS^*(p_m)$ and $\BS^*(p_l) = \BS^*(p_n)$, strict monotonicity requires that $\GS(\QBAF^*,p_k) > \GS(\QBAF^*,p_m)$ and $\GS(\QBAF^*,p_l) < \GS(\QBAF^*,p_n)$. 
    Then, by Definition \ref{def:ranking}, $\deal_k \succ_\GS^{\QBAF^*} \deal_m$ and $\deal_l \prec_\GS^{\QBAF^*} \deal_n$, resp.
    Further, given that $\Atts^*(p_k) = \emptyset <_\GS \Supps^*(p_k)$ and $\Atts^*(p_l) >_\GS \Supps^*(p_l) = \emptyset$, by Theorem \ref{thm:zopa}, $\deal_k \in \ZOPA(\QBAF^*,\GS)$ and $\deal_l \nin \ZOPA(\QBAF^*,\GS)$, resp.
\end{proof}

\end{document}